\documentclass{article}

\usepackage[numbers,sort&compress,square]{natbib}

\usepackage[preprint]{spherelab}

\usepackage[utf8]{inputenc} %
\usepackage[T1]{fontenc}    %

\usepackage{url}            %
\usepackage{nicefrac}       %

\usepackage{microtype}
\usepackage{amsfonts}
\usepackage{graphicx}
\usepackage{booktabs}
\usepackage{wrapfig}
\usepackage{amsmath}
\usepackage{amsthm}
\usepackage{amssymb}
\usepackage{caption}
\usepackage{algpseudocode}
\usepackage[lined,boxed,commentsnumbered,ruled,longend]{algorithm2e}
\theoremstyle{plain}
\newtheorem{theorem}{Theorem}[section]
\newtheorem{proposition}[theorem]{Proposition}
\newtheorem{lemma}[theorem]{Lemma}

\theoremstyle{definition}

\newtheorem{assumption}[theorem]{Assumption}

\theoremstyle{remark}

\usepackage[subtle, mathdisplays=tight, charwidths=tight, leading=normal]{savetrees}
\usepackage{subcaption} %
\usepackage{booktabs} %
\usepackage{longtable}
\usepackage{textcomp}
\usepackage{enumerate}

\usepackage{parskip}
\usepackage{epigraph}

\usepackage[align=center,shadow=true,shadowsize=4.5pt,nobreak=true,framemethod=tikz,skipabove=9.5pt,skipbelow=9pt,innertopmargin=8pt,innerbottommargin=8pt,innerleftmargin=5pt,innerrightmargin=5pt,leftmargin=1.5pt,rightmargin=1.5pt]{mdframed}
\usetikzlibrary{shadows}

\newlength\savewidth\newcommand\shline{\noalign{\global\savewidth\arrayrulewidth
  \global\arrayrulewidth 1pt}\hline\noalign{\global\arrayrulewidth\savewidth}}

\SetKwInput{KwInput}{Input}
\SetKwInput{KwOutput}{Output}

\usepackage{enumitem}

\usepackage{tcolorbox}
\usepackage[table]{xcolor}
\usepackage{bm}
\usepackage{algpseudocode} %

\usepackage[pagebackref=true,breaklinks=true,colorlinks=true,bookmarks=false]{hyperref}
\definecolor{deepred}{HTML}{940000}
\hypersetup{linkcolor=deepred}
\hypersetup{urlcolor  = [rgb]{0.4,0.15,0.95}}
\hypersetup{citecolor=[rgb]{0.4,0.15,0.95}}
\usepackage[capitalize,noabbrev]{cleveref}

\tcbset{
  myexample/.style={
    colback=white,
    colframe=black!60,
    colbacktitle=black!80,
    coltitle=white,
    fonttitle=\bfseries,
    arc=4mm,
    boxrule=1pt,
    left=3mm, right=3mm, top=1.5mm, bottom=1.5mm,
    title={#1}
  }
}

\definecolor{caseblue}{HTML}{5C6FA8}

\tcbset{
  casestudy/.style={
    colback=white,
    colframe=black!60,
    colbacktitle=caseblue,    %
    coltitle=white,
    fonttitle=\bfseries,
    arc=4mm,
    boxrule=1pt,
    left=3mm, right=3mm, top=1.5mm, bottom=1.5mm,
    title={#1}
  }
}
\usepackage{multirow}
\usepackage{makecell}

\usepackage{tikz} %

\newcommand{\ie}{\emph{i.e.}}
\newcommand{\eg}{\emph{e.g.}}

\usepackage[toc,page,header]{appendix}
\usepackage{minitoc}

\renewcommand \thepart{}
\renewcommand \partname{}

\newcommand\blfootnote[1]{%
  \begingroup
  \renewcommand\thefootnote{}\footnote{#1}%
  \addtocounter{footnote}{-1}%
  \endgroup
}

\usepackage{comment}

\title{\fontsize{16pt}{\baselineskip}\selectfont 
\underline{Pion}: A Spectrum-Preserving Optimizer\\via Orthogonal Equivalence Transformation}

\author{\fontsize{8.75pt}{\baselineskip}\selectfont
  Kexuan Shi\textsuperscript{1,\textdagger}~~~Hanxuan Li\textsuperscript{1,\textdagger}~~~Zeju Qiu\textsuperscript{1,2}~~~Yandong Wen\textsuperscript{3}~~~Simon Buchholz\textsuperscript{2}~~~Weiyang Liu\textsuperscript{1,*}\\[0.6mm]\fontsize{8.75pt}{\baselineskip}\selectfont
  \textsuperscript{1}The Chinese University of Hong Kong~~~~\textsuperscript{2}Max Planck Institute for Intelligent Systems~~~~\textsuperscript{3}Westlake University
  }

\begin{document}

\maketitle
\blfootnote{\textsuperscript{\textdagger}Equal contribution~~~~\textsuperscript{*}Corresponding author~~~~~~~~~~~~~~Project page:~\href{https://spherelab.ai/pion}{\tt spherelab.ai/pion}}

\doparttoc
\faketableofcontents

\vspace{-8.5mm}
\begin{abstract}
\vspace{-.8mm}
  We introduce Pion, a spectrum-preserving optimizer for large language model (LLM) training based on orthogonal equivalence transformation. Unlike additive optimizers such as Adam and Muon, Pion updates each weight matrix through left and right orthogonal transformations, preserving its singular values throughout training. This yields an optimization mechanism that modulates the geometry of weight matrices while keeping their spectral norm fixed. We derive the Pion update rule, systematically examine its design choices, and analyze its convergence behavior along with several key properties. Empirical results show that Pion offers a stable and competitive alternative to standard optimizers for both LLM pretraining and finetuning.
\end{abstract}

\section{Introduction}

As large language models (LLMs) continue to scale, the difficulty of training them also increases significantly. One of the most critical challenges today is designing optimizers that are both efficient and stable. Training stability can be partially characterized by the Maximal Update Parameterization ($\mu$P)~\cite{yang2023spectral}, where spectral norms of weights and updates are constrained such that width-invariant activations are of constant scale and hence prevent explosions.  By performing the steepest descent under the spectral norm through update orthogonalization, Muon~\cite{jordan2024muon} has emerged as a competitive alternative to AdamW~\cite{kingma2014adam,loshchilov2019decoupled}. Although Muon's orthogonalization ensures that each update is easily $\mu$P-compatible, the spectral norms of the weight matrices themselves may still drift throughout training. Built upon Muon, recent work addresses this issue either by introducing normalization~\cite{ding2021cogview,henry2020query,team2025kimi,li2025normuon} or by incorporating spectral retraction directly into the update rule~\cite{xie2026controlled}. Rather than adapting Muon to achieve $\mu$P, we introduce Pion, a fundamentally different optimizer that constrains the spectral norm of both weights and updates through its optimization dynamics. Specifically, Pion is derived from orthogonal equivalence transformations of weight matrices, updating each matrix via coupled left and right orthogonal transformations. Its design is guided by the following principles:

\vspace{0.1mm}
\begin{itemize}[leftmargin=*,nosep]
\setlength\itemsep{0.4em}
    \item \textbf{Algorithmic spectrum control}: Pion derives the update rule directly on the iso-spectral manifold, eliminating the need for explicit normalization while preserving the weight spectrum throughout optimization. This property is particularly desirable, as the upper-bounded spectral norm of the weight is closely linked to stronger generalization~\cite{miyato2018spectral,yoshida2017spectral,jiang2019computation,bartlett2017spectrally}. Moreover, the update’s spectral norm is also guaranteed to be upper bounded, making Pion easily compatible with $\mu$P.
    \item \textbf{Minimum energy training}: Pion updates weight matrices via orthogonal equivalence transformations, which inherently preserve hyperspherical energy~\cite{liu2018mhe,Liu2021SphereUni}. This energy quantifies how uniformly normalized neurons are distributed on the hypersphere, and lower energy has been shown to correlate with better generalization~\cite{liu2018mhe,Lin20CoMHE,Liu2021OPT}. Because zero-mean Gaussian weight initialization yields a minimum-energy configuration, Pion provably preserves this configuration throughout training, maintaining a uniform hyperspherical distribution of normalized neurons.
\end{itemize}

Pion is inspired by POET~\cite{qiu2025reparameterized,qiu2026poetx}, which reparameterizes each weight matrix as a left orthogonal matrix, a randomly initialized base weight, and a right orthogonal matrix, learning only the two orthogonal factors. This reparameterization enforces spectrum preservation by construction, but recasts the optimization variables from the weights themselves to auxiliary orthogonal parameters. While this shift enables greater memory efficiency~\cite{qiu2026poetx}, it also complicates training dynamics, giving rise to issues such as loss spikes and the need for careful momentum design. Pion, short for \underline{P}OET-\underline{i}nduced \underline{o}ptimizer with \underline{n}o reparameterization, removes this auxiliary parameterization and instead turns the same principle into a direct optimizer. Specifically, Pion updates each weight matrix through left and right orthogonal transformations, preserving its singular-value spectrum throughout training while operating directly on the model weights. This yields a spectrum-preserving optimization dynamics that retains the geometric inductive bias of POET in a simpler and more stable optimizer form.

\section{\underline{Pion}: A Spectrum-Preserving Optimizer for LLM Training}
\subsection{Background and Preliminaries}

\begin{wrapfigure}{r}{0.45\linewidth}
\scriptsize
\centering
\vspace{-2.6mm}
\includegraphics[width=1.0\linewidth]{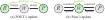}
\vspace{-2em}
\caption{\scriptsize Comparison of POET and Pion (Green: learnable).}
    \label{fig:poetpion}
\vspace{-.75mm}
\end{wrapfigure}

POET~\cite{qiu2025reparameterized,qiu2026poetx} reparameterizes each weight matrix as $\bm{W}_{RP}=\bm{R}\bm{W}_0\bm{P}$, where $\bm{W}_0\in\mathbb{R}^{d_{\text{out}}\times d_{\text{in}}}$ is fixed at random initialization, and $\bm{R}\in\mathbb{R}^{d_{\text{out}}\times d_{\text{out}}}$, $\bm{P}\in\mathbb{R}^{d_{\text{in}}\times d_{\text{in}}}$ are trainable orthogonal matrices. This corresponds to an orthogonal equivalence transformation that acts on both sides of $\bm{W}_0$, yielding the forward pass weight $\bm{R}\bm{W}_0\bm{P}$.
After training, $\bm{R}$ and $\bm{P}$ are merged into the weight matrix, so POET incurs no additional inference overhead. However, since optimization is performed over two orthogonal matrices while the weight matrix remains fixed throughout training, this reparameterization poses non-trivial challenges in both training stability and cross-architecture adaptability. Motivated by this, Pion introduces a novel update rule that fully preserves the weight spectrum without reparameterization.

\subsection{Spectrum-Preserving Update Rule}
We begin with the intuition behind Pion's update rule. Consider a general weight matrix $\bm{W}\in \mathbb{R}^{d_{\text{out}} \times d_{\text{in}}}$. At iteration $t$, we can trivially write $\bm{W}_t$ as 
$\bm{W}_t = \bm{I}_{d_{\text{out}}} \, \bm{W}_t \, \bm{I}_{d_{\text{in}}}$
where $\bm{I}_{d_{\text{out}}}$ and $\bm{I}_{d_{\text{in}}}$ are identity matrices of the corresponding sizes. 
Geometrically, each identity matrix is the neutral element of an orthogonal group, representing a zero rotation.
Pion leverages this observation by \emph{updating the identity factors directly on the orthogonal group} without an explicit reparameterization like $\bm{R}\bm{W}\bm{P}$.
This induces left and right orthogonal transformations of $\bm{W}_t$, preserving its spectrum. See the comparison in Figure~\ref{fig:poetpion}.

The challenge is to update the identity factors on the orthogonal group. Since the orthogonal group is a compact Lie group, we use standard techniques from Lie group optimization~\citep{mhammedi2017efficient,lezcano2019cheap,boumal2023introduction}.
Let $\bm{G}_t \in \mathbb{R}^{d_{\text{out}}\times d_{\text{in}}}$ denote the gradient of $\bm{W}_t$. Pion updates $\bm{W}_t$ in a spectrum-preserving manner:

\vspace{-4mm}
\begin{equation}\label{eq:update}
\footnotesize
    \bm{G}^{\text{in}}_t = \bm{W}_t^\top \bm{G}_t - (\bm{W}_t^\top \bm{G}_t)^\top,\quad 
    \bm{G}^{\text{out}}_t = \bm{G}_t \bm{W}_t^\top - (\bm{G}_t \bm{W}_t^\top)^\top,\quad %
    \boxed{\bm{W}_{t+1} = \exp(-\eta \bm{G}^{\text{out}}_t)\, \bm{W}_t\, \exp(-\eta \bm{G}^{\text{in}}_t)}
\end{equation}
\vspace{-3.5mm}

where $\exp(\cdot)$ denotes the matrix exponential. The update rule can be understood in three steps.
First, we apply the chain rule to the two identity factors $\bm{I}_{d_{\text{out}}}$ and $\bm{I}_{d_{\text{in}}}$, giving the corresponding gradients $\bm{G}_t \bm{W}_t^\top$ and $\bm{W}_t^\top \bm{G}_t$.
Second, we enforce skew-symmetry to project these gradients onto the Lie algebra, producing Lie algebra elements from the two identity factors. %
Finally, we map these elements back to the Lie group via the matrix exponential, producing valid orthogonal transformations.

\subsection{Properties of Pion's Update Rule}

Pion's update rule can be written as $\bm{W}_{t+1}=\bm{R}_t\bm{W}_t\bm{P}_t$ with $\bm{R}_t=\exp(-\eta\bm{G}^{\text{out}}_t)$ and $\bm{P}_t=\exp(-\eta\bm{G}^{\text{in}}_t)$, where both $\bm{R}_t$ and $\bm{P}_t$ are orthogonal. Hence Pion transforms the row and column subspaces of $\bm{W}_t$ while preserving its singular values. We start with Pion's geometric structures.

\begin{proposition}[Geometric structure of Pion's update]
\label{prop:total_rotation}
\label{prop:bilateral_rotation}
Let $\Delta\bm{W}=\bm{W}_{t+1}-\bm{W}_t$. Pion's update preserves the singular values of $\bm{W}_t$ and only changes its row and column subspaces through orthogonal transformations, in which positive determinant leads to rotation. Consequently, $\|\Delta\bm{W}\|_F$ characterizes the total rotational strength applied to $\bm{W}_t$. At a finer level, the in-side and out-side updates decompose into independent planar rotations on orthogonal $2$D invariant subspaces induced by $\bm{G}^{\text{in}}_t$ and $\bm{G}^{\text{out}}_t$. The quantities $\frac{1}{\sqrt{d_{\text{in}}}}\|\bm{G}^{\text{in}}_t\|_F$ and $\frac{1}{\sqrt{d_{\text{out}}}}\|\bm{G}^{\text{out}}_t\|_F$ characterize the average rotation magnitudes on the two sides, while $\|\bm{G}^{\text{in}}_t\|_2$ and $\|\bm{G}^{\text{out}}_t\|_2$ control the maximum rotation angles.
\end{proposition}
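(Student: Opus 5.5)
The plan is to treat the proposition as a bundle of four claims and verify each from the structure of \eqref{eq:update}.

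\textbf{Orthogonality and spectrum.} First, $\bm{G}^{\text{in}}_t$ and $\bm{G}^{\text{out}}_t$ are skew-symmetric by construction: each has the form $\bm{A}-\bm{A}^\top$. For any skew-symmetric $\bm{S}$ we have $\exp(-\eta\bm{S})^\top=\exp(\eta\bm{S})=\exp(-\eta\bm{S})^{-1}$, so both $\bm{R}_t$ and $\bm{P}_t$ are orthogonal. Moreover $\det\exp(-\eta\bm{S})=e^{-\eta\,\mathrm{tr}\,\bm{S}}=1$, so both lie in $SO(d)$ and are genuine rotations. Next, write the SVD $\bm{W}_t=\bm{U}\bm{\Sigma}\bm{V}^\top$. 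Then
\[
\bm{W}_{t+1}=(\bm{R}_t\bm{U})\,\bm{\Sigma}\,(\bm{P}_t^\top\bm{V})^\top
\]
is again an SVD, with the same $\bm{\Sigma}$. The column space $\mathrm{span}(\bm{U})$ is mapped by $\bm{R}_t$, and the row space $\mathrm{span}(\bm{V})$ is mapped by $\bm{P}_t^\top$. This gives the first sentence of the proposition.

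\textbf{$\|\Delta\bm{W}\|_F$ as total rotational strength.} I would argue this by first-order expansion. We have $\Delta\bm{W}=-\eta(\bm{G}^{\text{out}}_t\bm{W}_t+\bm{W}_t\bm{G}^{\text{in}}_t)+O(\eta^2)$, which is a tangent vector to the orbit $\{\bm{R}\bm{W}_t\bm{P}\}$, that is, to the iso-spectral manifold. Since the spectrum (hence $\|\bm{W}\|_F$) is fixed, every displacement is purely rotational. Exactly, $\Delta\bm{W}$ is the chord between two points on the same orbit, so its norm measures the combined rotation. This claim is interpretive; I would state it as the observation that no radial or spectral component exists, rather than prove a sharp equality.

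\textbf{Planar-rotation decomposition.} I would invoke the real spectral theorem for skew-symmetric matrices: there is $\bm{Q}\in O(d)$ with
\[
\bm{Q}^\top\bm{G}\bm{Q}=\mathrm{diag}\left(\theta_1\bm{J},\dots,\theta_k\bm{J},0,\dots,0\right),\qquad \bm{J}=\begin{pmatrix}0&-1\\1&0\end{pmatrix},\quad \theta_i\ge 0.
\]
It follows that $\exp(-\eta\bm{G})=\bm{Q}\,\mathrm{diag}(\mathrm{Rot}(-\eta\theta_1),\dots,\bm{I})\,\bm{Q}^\top$. This is a product of commuting planar rotations on mutually orthogonal 2D invariant planes, by angles $\eta\theta_i$. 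I would apply this separately to $\bm{G}^{\text{in}}_t$ and to $\bm{G}^{\text{out}}_t$.

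\textbf{Magnitude statements.} These follow from the same normal form. We have $\|\bm{G}\|_F^2=2\sum_i\theta_i^2$ and $\|\bm{G}\|_2=\max_i\theta_i$. Hence $\tfrac{1}{\sqrt d}\|\bm{G}\|_F=\sqrt{\tfrac{2}{d}\sum_i\theta_i^2}$ is the root-mean-square rotation rate per coordinate direction; with $d$ dimensions and each plane counted in two dimensions, this is the average angle up to the factor $\eta$. Likewise the largest angle is $\eta\|\bm{G}\|_2$.

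\textbf{Main obstacle.} The hardest part is making the ``characterizes'' statements precise rather than heuristic. This applies especially to the $\|\Delta\bm{W}\|_F$ claim. There the left and right rotations interact, and the exact identity holds only to first order or as bounds of the form
\[
\|\Delta\bm{W}\|_F\le\|\bm{R}_t-\bm{I}\|_F\|\bm{W}_t\|_2+\|\bm{W}_t\|_2\|\bm{P}_t-\bm{I}\|_F .
\]
Here $\|\bm{R}-\bm{I}\|_F^2=\sum_i 8\sin^2(\eta\theta_i/2)$, written in terms of the rotation angles. I would present these as two-sided bounds in terms of the angles and leave exact equality as interpretive.
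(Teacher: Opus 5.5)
Your proposal is correct and follows essentially the same route as the paper's derivation in Appendix~\ref{app:pion_geometry_derivation}. Skew-symmetry gives orthogonality, and the transformed SVD $(\bm{R}_t\bm{U})\bm{\Sigma}(\bm{P}_t^\top\bm{V})^\top$ gives spectrum preservation. The real normal form of a skew-symmetric matrix gives the planar rotations, and $\|\bm{G}\|_F^2=2\sum_i\theta_i^2$, $\|\bm{G}\|_2=\max_i\theta_i$ give the magnitude statements. Both you and the paper treat the $\|\Delta\bm{W}\|_F$ claim as interpretive rather than proved.

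Your extra upper bound via $\|\bm{R}_t-\bm{I}\|_F^2=\sum_i 8\sin^2(\eta\theta_i/2)$ is correct. However, only that half of your promised ``two-sided bounds'' holds in general, because nontrivial rotations can leave $\bm{W}$ unchanged (e.g., $\bm{W}=\bm{I}$, $\bm{P}=\bm{R}^\top$). Any lower bound would therefore have to use the specific form of Pion's generators, not the rotation angles alone.
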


Because $\bm{R}_t$ and $\bm{P}_t$ are orthogonal, they preserve the $\ell_2$ norms of the rows and columns of $\bm{W}_t$. Hence, $\Delta\bm{W}$ reflects angular deviation rather than rescaling. The update norm is therefore directly interpretable as rotational motion, unlike vanilla gradient descent, which generally entangles changes in magnitude and direction. Appendix~\ref{app:pion_geometry_derivation} provides a detailed derivation of the planar-rotation view.

Then we show in the following theorem that Pion's spectrum-preserving update admits convergence guarantees under standard assumptions. The proof is provided in the Appendix~\ref{app:convergence}.

\begin{theorem}[Pion's Convergence]
\label{thm:bilateral-pion-convergence}
Assume that $f(\bm{W})$ is $L$-smooth and lower bounded by $f_{\inf}$. Let the stochastic gradient be $\tilde{\bm{G}}_t=\nabla f(\bm{W}_t)+\bm{\xi}_t$, where $\mathbb{E}_t[\bm{\xi}_t]=0$ and $\mathbb{E}_t[\|\bm{\xi}_t\|_F^2]\le\sigma^2$. Assume the iterates remain on the iso-spectral manifold induced by $\bm{W}_0$, so that $\|\bm{W}_t\|_2=\gamma$ for all $t$. Define
$
\bm{G}_t^{\mathrm{in}}=\bm{W}_t^\top \nabla f(\bm{W}_t)-\nabla f(\bm{W}_t)^\top \bm{W}_t
$
and
$
\bm{G}_t^{\mathrm{out}}=\nabla f(\bm{W}_t)\bm{W}_t^\top-\bm{W}_t\nabla f(\bm{W}_t)^\top .
$
Assume the updates are conducted for $T+1$ iterations with step size $\eta=C/\sqrt{T+1}$, where $C>0$ is sufficiently small such that the one-step descent coefficient remains positive. Then we have that

\vspace{-4.5mm}
\begin{equation}
\footnotesize
    \min_{0\le t\le T}
    \mathbb{E}
    \left[
    \|\bm{G}_t^{\mathrm{in}}\|_F^2
    +
    \|\bm{G}_t^{\mathrm{out}}\|_F^2
    \right]
    \le
    \frac{1}{\sqrt{T+1}}
    \left(C_1+C_2\right)= \mathcal{O}(\frac{1}{\sqrt{T}}),
\end{equation}
\vspace{-3.5mm}

where $C_1$ depends on the initial optimality gap $f(\bm{W}_0)-f_{\inf}$, and $C_2$ depends on $L$, $\gamma$, $\sigma^2$, and $C$.
\end{theorem}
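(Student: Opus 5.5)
We will use the standard nonconvex SGD template: a one-step descent inequality from $L$-smoothness, then telescope. Write $\tilde{\bm{G}}^{\mathrm{in}}_t,\tilde{\bm{G}}^{\mathrm{out}}_t$ for the skew matrices built from the stochastic gradient $\tilde{\bm{G}}_t$, so that $\bm{W}_{t+1}=\exp(-\eta\tilde{\bm{G}}^{\mathrm{out}}_t)\bm{W}_t\exp(-\eta\tilde{\bm{G}}^{\mathrm{in}}_t)$. We expand each exponential as $\exp(-\eta\bm{A})=\bm{I}-\eta\bm{A}+\bm{E}(\bm{A})$ with $\|\bm{E}(\bm{A})\|\le \tfrac{\eta^2}{2}\|\bm{A}\|^2 e^{\eta\|\bm{A}\|}$. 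A cleaner bound for skew $\bm{A}$ is $\|\bm{E}(\bm{A})\|_2\le \tfrac{\eta^2}{2}\|\bm{A}\|_2^2$, since the eigenvalues are purely imaginary. This gives
\[
\Delta\bm{W}_t=-\eta\bigl(\tilde{\bm{G}}^{\mathrm{out}}_t\bm{W}_t+\bm{W}_t\tilde{\bm{G}}^{\mathrm{in}}_t\bigr)+\bm{Q}_t,
\]
where $\|\bm{Q}_t\|_F=\mathcal{O}\bigl(\eta^2\gamma(\|\tilde{\bm{G}}^{\mathrm{in}}_t\|_F^2+\|\tilde{\bm{G}}^{\mathrm{out}}_t\|_F^2)\bigr)$. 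This uses $\|\bm{W}_t\|_2=\gamma$ and $\|\bm{A}\bm{B}\|_F\le\|\bm{A}\|_2\|\bm{B}\|_F$.

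The key algebraic identity is that the first-order term is an exact descent direction. With $\bm{G}=\nabla f(\bm{W}_t)$, we have $\langle \bm{G},\bm{A}\bm{W}_t\rangle=\langle \bm{G}\bm{W}_t^\top,\bm{A}\rangle$. For skew $\bm{A}$ this equals $\tfrac12\langle \bm{G}\bm{W}_t^\top-\bm{W}_t\bm{G}^\top,\bm{A}\rangle=\tfrac12\langle\bm{G}^{\mathrm{out}}_t,\bm{A}\rangle$, and similarly on the right. Since $\tilde{\bm{G}}^{\mathrm{in/out}}_t$ are linear in $\tilde{\bm{G}}_t$, they have conditional mean $\bm{G}^{\mathrm{in/out}}_t$. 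Taking $\mathbb{E}_t$ of the linear term then gives $-\tfrac{\eta}{2}(\|\bm{G}^{\mathrm{in}}_t\|_F^2+\|\bm{G}^{\mathrm{out}}_t\|_F^2)$.

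The remaining terms are $\langle\bm{G},\bm{Q}_t\rangle+\tfrac L2\|\Delta\bm{W}_t\|_F^2$. We bound $\mathbb{E}_t\|\tilde{\bm{G}}^{\mathrm{in}}_t\|_F^2\le 2\|\bm{G}^{\mathrm{in}}_t\|_F^2+\mathcal{O}(\gamma^2\sigma^2)$, and likewise for the out side, using $\|\bm{W}^\top\bm{\xi}-\bm{\xi}^\top\bm{W}\|_F\le2\gamma\|\bm{\xi}\|_F$. The quadratic term $\tfrac L2\|\Delta\bm{W}\|_F^2$ then contributes $\mathcal{O}(L\gamma^2\eta^2)(\|\bm{G}^{\mathrm{in}}\|^2+\|\bm{G}^{\mathrm{out}}\|^2+\gamma^2\sigma^2)$.

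\textbf{Main obstacle.} The hard step is the cross term $\langle\bm{G},\bm{Q}_t\rangle$. It requires bounding $\|\nabla f(\bm{W}_t)\|_F$, and it involves higher moments of $\tilde{\bm{G}}$ through the exponential remainder. We would handle both issues as follows.
\begin{itemize}
\item We would absorb this term by using $\langle\bm{G},\bm{A}\bm{W}\rangle=\tfrac12\langle\bm{G}^{\mathrm{out}},\bm{A}\rangle$ again wherever possible.
\item Otherwise we would add the assumption that $\|\nabla f\|_F\le M$ on the compact iso-spectral manifold. This is automatic, since a continuous gradient is bounded on a compact set, and $M$ can be folded into "depends on $L,\gamma,\ldots$".
\item To keep only second moments, we would use the skew bound $\|\exp(-\eta\bm{A})-\bm{I}+\eta\bm{A}\|_2\le\tfrac{\eta^2}{2}\|\bm{A}\|_2^2$. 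Alternatively, we would bound the remainder crudely by the Lipschitz property of the orthogonal action, $\|\exp(-\eta\bm{A})-\bm{I}\|_2\le\eta\|\bm{A}\|_2$, so that $\|\bm{Q}_t\|_F\lesssim\eta^2\gamma\|\tilde{\bm{A}}\|\|\tilde{\bm{B}}\|$ for the cross product of the two sides.
\end{itemize}

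The resulting one-step inequality is
\[
\mathbb{E}f(\bm{W}_{t+1})\le\mathbb{E}f(\bm{W}_t)-\eta\bigl(\tfrac12-c\,\eta\bigr)\mathbb{E}\bigl[\|\bm{G}^{\mathrm{in}}_t\|_F^2+\|\bm{G}^{\mathrm{out}}_t\|_F^2\bigr]+c'\eta^2,
\]
where $c,c'$ depend on $L,\gamma,\sigma^2,M$. The condition "$C$ sufficiently small" ensures $\tfrac12-c\eta\ge\tfrac14$.

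Finally, we telescope over $t=0,\dots,T$, use $f\ge f_{\inf}$, divide by $(T+1)\eta/4$, and bound the minimum by the average. This gives
\[
\min_t\mathbb{E}[\cdot]\le\frac{4(f(\bm{W}_0)-f_{\inf})}{C\sqrt{T+1}}+\frac{4c'C}{\sqrt{T+1}},
\]
which identifies $C_1=4(f(\bm{W}_0)-f_{\inf})/C$ and $C_2=4c'C$.
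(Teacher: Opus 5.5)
Your proposal is correct and follows the same argument as the paper's proof in Appendix~\ref{app:convergence}. Both use the $L$-smoothness descent inequality, the identity $\langle \nabla f(\bm{W}_t), \bm{W}_t\bm{G}^{\mathrm{in}}_t\rangle = \tfrac12\|\bm{G}^{\mathrm{in}}_t\|_F^2$ (and its out-side analogue) applied to the unbiased stochastic skew gradients, an $\mathcal{O}(\eta^2)$ remainder together with the $8\gamma^2\sigma^2$ variance bound, and telescoping with $\eta=C/\sqrt{T+1}$.

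The one technical difference is in the update you analyze. The paper analyzes the truncated map $\exp^{(2)}$, which is why it needs the on-manifold assumption, and it controls the $\eta^3,\eta^4$ terms through assumed trajectory bounds $G$ and $B$ on $\|\nabla f(\bm{W}_t)\|_F$ and the skew gradients. You instead use the exact exponential, whose orthogonality gives $\|\exp(-\eta\bm{A})-\bm{I}\|\le\eta\|\bm{A}\|$ and a purely quadratic remainder. As a result only second moments of the noise enter, and your gradient bound $M$ follows from compactness of the iso-spectral manifold rather than being assumed.
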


\subsection{Design Principles for Stable Training and Convergence}
\label{sec:design_exploration_protocol}

While Pion's update rule offers a simple and functional approach to spectrum-preserving optimization, training practical LLMs demands additional design choices for greater stability. To this end, we explore the following design principles. We note that our exploration is by no means comprehensive, but rather represents an initial yet principled effort toward building a stable spectrum-preserving optimizer.
For rapid prototyping, we perform all design explorations using a 60M-parameter LLaMA-based model~\citep{zhang2019root,shazeer2020glu,touvron2023llama}, a common setup for ablation~\citep{zhao2024galore,qiu2025reparameterized,gu2026mano}. All the models in this section are trained on C4~\citep{raffel2020exploring} with sequence length 256 for 9.6B tokens, ensuring sufficient training.

\subsubsection{Consistent Update}
\label{sec:consistent_rotational_updates}

\begin{wrapfigure}{r}{0.375\linewidth}
\scriptsize
\centering
\vspace{-1mm}
\includegraphics[width=1.0\linewidth]{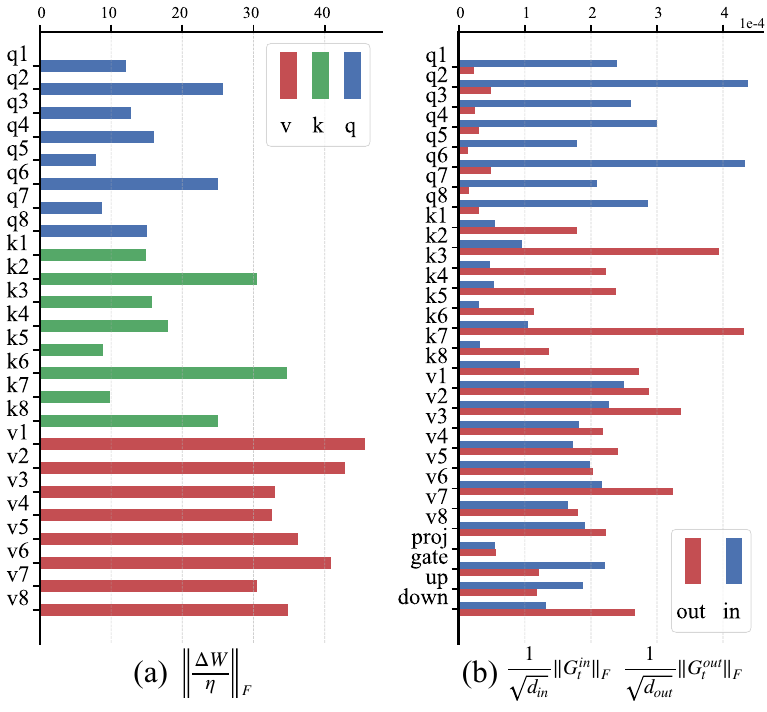}
\vspace{-2.25em}
\caption{\scriptsize Inconsistent updates in Pion.}
    \label{fig:update_with_blocks_and_rotations}
\vspace{1mm}
\end{wrapfigure}

To train deep neural networks effectively, prior work~\citep{ioffe2015batch,ba2016layer,xiong2020layer,glorot2010understanding,yang2023spectral} has sought to keep network components operating under stable input/output distributions and receiving consistent feature updates. This consistency principle has also guided the scaling of modern optimizers~\citep{liu2025muon,xie2026controlled,bernstein2024old} to large models. In particular, optimizer-induced parameter updates $\frac{1}{\eta}\Delta \bm{W} $ are expected to be scale-consistent, \ie, their norms should grow proportionally with the size of the corresponding parameter space. We analyze the training dynamics of Pion and identify two notable violations of this principle. First, the original update produces substantial heterogeneity in the normalized update magnitude, $\|\frac{1}{\eta}\Delta \bm{W}\|_F$, across identically sized parameter matrices within the same layer, as shown in Figure~\ref{fig:update_with_blocks_and_rotations}(a).
Second, for individual parameter matrices, the average bilateral rotation angles, which are measured by $\frac{1}{\sqrt{d_{\text{in}}}}\|\bm{G}^{\text{in}}_t\|_F$ and $\frac{1}{\sqrt{d_{\text{out}}}}\|\bm{G}^{\text{out}}_t\|_F$, show a pronounced imbalance between the input and output sides, as shown in Figure~\ref{fig:update_with_blocks_and_rotations}(b).
These inconsistencies stem from a geometric mismatch: the naive update's transformation dynamics are neither scale-consistent across the full parameter feature space nor balanced between each matrix's input and output feature spaces.
We resolve these mismatches by controlling update magnitude in the Lie algebra. Specifically, we normalize the in-side and out-side skew-symmetric gradients:

\vspace{-4mm}
\begin{equation}
\footnotesize
\bm{G}^{\text{out}}_t \leftarrow \sqrt{d_{\text{out}}} \cdot \bm{G}^{\text{out}}_t / \|\bm{G}^{\text{out}}_t\|_F,\quad
\bm{G}^{\text{in}}_t \leftarrow \sqrt{d_{\text{in}}} \cdot \bm{G}^{\text{in}}_t / \|\bm{G}^{\text{in}}_t\|_F .
\label{eq:brb}
\end{equation}
\vspace{-4mm}

To enforce scale consistency across weight matrices, we introduce a per-weight coefficient $\alpha_t$ below:

\vspace{-4mm}
\begin{equation}
\footnotesize
\bm{W}_{t+1}
= \exp(-\eta\alpha_t\bm{G}^{\text{out}}_t)\,\bm{W}_t\,\exp(-\eta\alpha_t\bm{G}^{\text{in}}_t),
\quad
\text{s.t. } \mathrm{RMS}(\Delta\bm{W}/\eta)\approx\mathrm{const}.
\label{eq:alpha}
\end{equation}
\vspace{-4mm}

\begin{wrapfigure}{r}{0.31\linewidth}
\scriptsize
\centering
\includegraphics[width=1.0\linewidth]{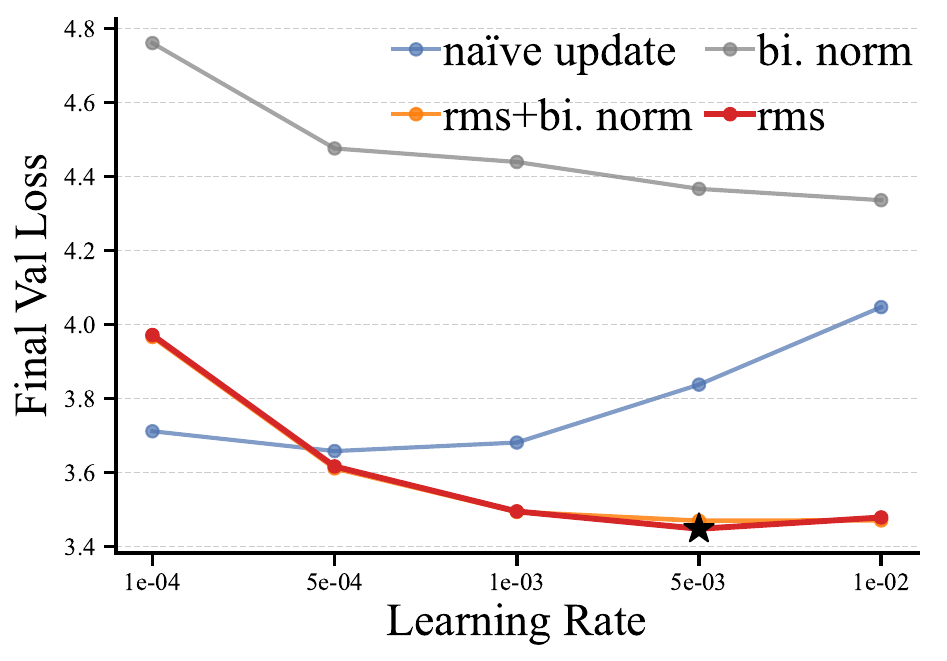}
\vspace{-5.5mm}
\caption{\scriptsize Validation loss comparison of different consistent update strategies. ``$\bigstar$'' denotes the achieved minimum validation loss.}
\label{fig:loss_of_consistent_update}
\vspace{10mm}
\end{wrapfigure}

Directly computing $\frac{1}{\eta}\Delta\bm{W}$ under the exponential map would nearly double the cost, so we use a first-order approximation to compute $\alpha$:

\vspace{-4mm}
\begin{equation}
\footnotesize
    \alpha_t \approx
    \frac{
    c\sqrt{d_{\text{out}}d_{\text{in}}}
    }
    {
    \|\Delta \bm{W}/\eta \|_F+\epsilon
    },\quad \text{where}~~\Delta\bm{W}/\eta\approx-\bm{G}^{\text{out}}_t\bm{W}_t-\bm{W}_t\bm{G}^{\text{in}}_t.
\end{equation}
\vspace{-3mm}

Such per-weight scaling in Equation~\eqref{eq:alpha} makes the effective rotational update magnitude scale with the size of each weight matrix, so that the rotational strengths across different matrices are approximately proportional to the square root of the ratio of their parameter counts. Such rms scaling is also used to enforce proportionally consistent updates in Euclidean space~\citep{liu2025muon,gu2026mano}.

Results in Figure~\ref{fig:loss_of_consistent_update} show that the naive update rule performs well only under small learning rates and diverges when the learning rate becomes large. In contrast, the RMS-controlled scale-consistent update ensures consistent update magnitudes across matrices and can effectively utilize larger learning rates to accelerate convergence. Applying bilateral normalization alone, or combining RMS control with bilateral normalization, does not further improve training stability or final performance. Taking a step further, these results suggest that scale-consistent rotation update across parameter matrices is key to stable spectrum-preserving optimization. We therefore adopt RMS-controlled scale consistency as a core component of Pion, and bilateral normalization is not adopted.

\begin{figure}[t!]
    \centering
    \setlength{\abovecaptionskip}{3pt}
    \setlength{\belowcaptionskip}{-6pt}
    \vspace{-1mm}
    \includegraphics[width=\linewidth]{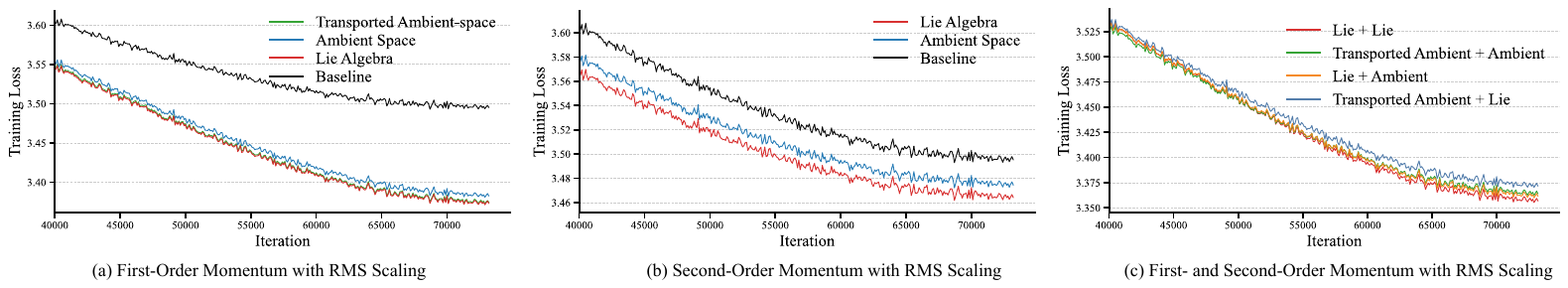}
    \caption{\scriptsize Training loss curves of momentum designs. Figures (a) and (b) show first-order-only and second-order-only momentum, both with RMS scaling. Figure (c) combines the two momentum techniques. ``Lie+Lie'' denotes Lie-algebra first- and second-order momentum. ``Transported Ambient + Ambient'' denotes transported ambient-space first-order momentum with ambient-space second-order momentum.}
    \label{fig:momentum_loss_curves}
\end{figure}

\subsubsection{Momentum Design}
\label{sec:momentum_geometry}

A key ingredient to accelerate gradient-based iterative optimization is momentum~\citep{polyak1964some,Nesterov1983AMF}, which uses accumulated gradient information to adapt the current update direction. This technique has inspired a range of highly effective first-order optimizers for deep learning~\citep{duchi2011adaptive,zeiler2012adadelta,kingma2014adam,reddi2019convergence}. In this section, we explore how to integrate momentum into the Pion's update rule in Equation~\eqref{eq:update}.

\textbf{Transported ambient-space first-order momentum}. We note that the update trajectory $\{\bm{W}_t\}_{t=1}^{\infty}$ evolves on a smooth iso-spectral manifold $\mathcal{M}$ with nonzero curvature (we assume the singular values of $\bm{W}_0$ are distinct and nonzero). Hence, momentum should be expressed in a consistent tangent space. A natural approach is to parallel-transport the historical momentum to the current tangent space~\citep{smith2014optimization,becigneul2018riemannian}. 
Following this idea, we derive a transported first-order momentum update below. After the $t$-th step, $\exp(-\eta \bm{G}^{\mathrm{out}}_t)$ and $\exp(-\eta \bm{G}^{\mathrm{in}}_t)$ used in the update are reused to transport $\bm{m}_t$ to the tangent space:

\vspace{-4mm}
\begin{equation}
\footnotesize
\begin{aligned}
    &\bm{m}_t = \beta_1 \bm{m}_{t-1} + (1-\beta_1) \bm{G}_t,\quad \bm{G}^{\text{in}}_t = \bm{W}_t^\top \bm{m}_t - (\bm{W}_t^\top \bm{m}_t)^\top,\quad 
    \bm{G}^{\text{out}}_t = \bm{m}_t \bm{W}_t^\top - (\bm{m}_t \bm{W}_t^\top)^\top, \\
    &\quad\quad~~ \underbrace{\bm{W}_{t+1} = \exp(-\eta \bm{G}^{\text{out}}_t)\, \bm{W}_t\, \exp(-\eta \bm{G}^{\text{in}}_t)}_{\text{Update rule with first-order mementum}}, \quad \quad \underbrace{\bm{m}_t \leftarrow \exp(-\eta \bm{G}^{\text{out}}_t)\, \bm{m}_t\, \exp(-\eta \bm{G}^{\text{in}}_t)}_{ \text{Transported to}~T_{\bm{W}_{t+1}} \mathcal{M}}.
\end{aligned}
\end{equation}
\vspace{-3mm}

where $T_{\bm{W}_{t+1}}\mathcal{M}$ denote the tangent space at $\bm{W}_{t+1} \in \mathcal{M}$. The next gradient $\bm{G}_{t+1}$ is also in $T_{W_{t+1}}\mathcal{M}$.
This parallel transport improves the accuracy of first-order gradient estimation. For comparison, we also consider a first-order momentum variant without transport, referred to as ambient-space momentum. 
As shown in Figure~\ref{fig:momentum_loss_curves}(a), we empirically compare the new update rule above with or without momemtum transportation, and the transported version consistently achieves faster convergence. 

\textbf{Lie algebra first-order momentum}. Because Pion's update is performed in the Lie algebra, \ie, within a single tangent space. This property provides another way to accumulate first-order momentum, in which the accumulation is performed directly in the Lie algebra.
Let $\bm{m}^{\text{in}}_t \in \mathbb{R}^{d_{\text{in}}\times d_{\text{in}}}$ and $\bm{m}^{\text{out}}_t \in \mathbb{R}^{d_{\text{out}}\times d_{\text{out}}}$ denote the momentum variables associated with the in-side and out-side gradient update, respectively. Given the same $\bm{G}^{\text{in}}_t$ and $\bm{G}^{\text{out}}_t$ in Equation~\eqref{eq:update}, the modified update rule becomes

\vspace{-4mm}
\begin{equation}\label{eq:lie_momentum}
\footnotesize
    \underbrace{\bm{m}^{\text{in}}_t = \beta_1 \bm{m}^{\text{in}}_{t-1} + (1-\beta_1) \bm{G}^{\text{in}}_t}_{\text{Input-side Lie algebra momentum}}, \quad 
    \underbrace{\bm{m}^{\text{out}}_t = \beta_1 \bm{m}^{\text{out}}_{t-1} + (1-\beta_1) \bm{G}^{\text{out}}_t}_{\text{Output-side Lie algebra momentum}},\quad \underbrace{\bm{W}_{t+1} = \exp(-\eta \bm{m}^{\text{out}}_t)\, \bm{W}_t\, \exp(-\eta \bm{m}^{\text{in}}_t)}_{\text{Update rule with first-order momentum}}.
\end{equation}
\vspace{-2.5mm}

As shown in Figure~\ref{fig:momentum_loss_curves}, the Lie algebra momentum achieves the fastest convergence, slightly outperforming the transported ambient-space momentum. All the first-order momentum formulations are spectrum-preserving by construction, as they generate updates via skew-symmetric operators followed by the matrix exponential map. Ambient-space momentum is the most efficient in compute and memory, but produces biased estimates due to tangent space mismatch. Transported ambient-space momentum corrects this bias via parallel transport at added computational cost, with no extra memory overhead. Lie algebra momentum achieves exact, geometrically consistent estimation at comparable compute cost, but requires additional memory for separate in- and out-side variables.

\textbf{Second-order momentum}. Second-order momentum tracks the running average of squared gradients, serving as an adaptive normalization factor. Unlike its first-order counterpart, it accumulates no directional information across tangent spaces and therefore does not require parallel transport in manifold optimization~\citep{becigneul2018riemannian}. Guided by this observation and the design principles established for first-order momentum, we consider two natural implementations of second-order momentum: (1) estimating second-order momentum in the ambient space using the full gradient $\bm{G}_t$; and (2) modeling second-order statistics separately for the in-side and out-side gradients. Specifically, we use the standard second-order momentum formulation in AdamW. For example, the second-order momentum for the in-side update in Equation~\eqref{eq:lie_momentum} is $\bm{v}^{\mathrm{in}}_t = \beta_2 \bm{v}^{\mathrm{in}}_{t-1} + (1-\beta_2)(\bm{G}^{\mathrm{in}}_t \odot \bm{G}^{\mathrm{in}}_t)$ where $\odot$ is the element-wise multiplication. The complete algorithms are given in Algorithm~\ref{alg:pion} and Appendix~\ref{app:imp_details}.

From Figure~\ref{fig:momentum_loss_curves}(b), we observe that the Lie Algebra variants consistently outperform the ambient-space variants. We then examine their combined behavior in Figure~\ref{fig:momentum_loss_curves}(c). The Lie+Lie variant performs best, consistent with the trends observed for each momentum order individually. In contrast, mixed variants perform slightly worse, suggesting that mismatched deployments hinder this complementarity. This observation further suggests that, under our spectrum-preserving update rule, accumulating momentum in the Lie algebra provides a more natural and effective formulation. Thus, we adopt the Lie+Lie and Transported-Ambient+Ambient variants as two functional implementations of momentum in Pion, representing two effective design choices identified from our exploration.

\subsubsection{Alternate Update}
\label{sec:alternate_update_principle}

The bilateral update in Equation~\eqref{eq:update} applies both input-side and output-side orthogonal transformations at every iteration. We propose a more computationally efficient Pion variant that alternates between the input-side and output-side updates across successive iterations:

\vspace{-4mm}
\begin{equation}\label{eq:alternating_update}
\footnotesize
    \bm{W}_{t+1} =
        \exp(-\eta(1-\psi)
    \bm{G}^{\text{out}}_t) \cdot \bm{W}_t \cdot\exp(-\eta \psi\bm{G}^{\text{in}}_t),\quad\text{with}~\psi=1~\text{if}~t~\text{is odd},~~\psi=0~\text{if}~t~\text{is even}.
\end{equation}
\vspace{-4mm}

\begin{wrapfigure}{r}{0.42\linewidth}
\scriptsize
\centering
\vspace{-0.5mm}
\includegraphics[width=1.0\linewidth]{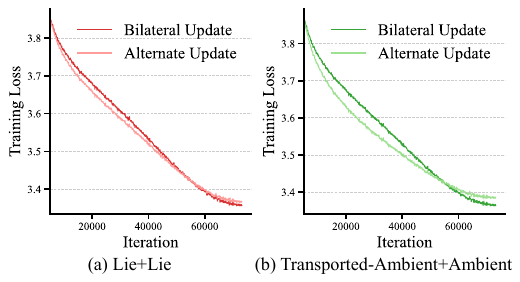}
\vspace{-5mm}
\caption{\scriptsize Training loss of bilateral and alternate update.}
\label{fig:alternate_update}
\end{wrapfigure}

The alternate update remains spectrum-preserving and also decouples the two orthogonal transformations across iterations, largely reducing per-step computation. 
Moreover, the alternation can be naturally extended to occur every few steps rather than every step.
From Figure~\ref{fig:alternate_update}, we observe that alternate update achieves performance close to bilateral update for both Lie+Lie and Transported-Ambient+Ambient variants. For Lie+Lie, the final loss of alternate update is 3.3654, only about $0.23\%$ higher than the 3.3575 achieved by bilateral update. 
This small gap suggests that updating the two transformations simultaneously is not always necessary to obtain strong optimization performance, and that much of the benefit can be retained through a more lightweight alternating scheme.
Alternate updates are slightly faster early in training, while bilateral updates overtake them near convergence, reflecting a tradeoff between efficiency and refinement. Overall, the alternate update is a strong compromise, as it preserves spectral structure, lowers computational cost, and achieves nearly the same final performance as the more expensive bilateral update.

\begin{figure}[h!]
    \centering
    \setlength{\abovecaptionskip}{3pt}
    \setlength{\belowcaptionskip}{8pt}
    \vspace{2.5mm}
    \includegraphics[width=\linewidth]{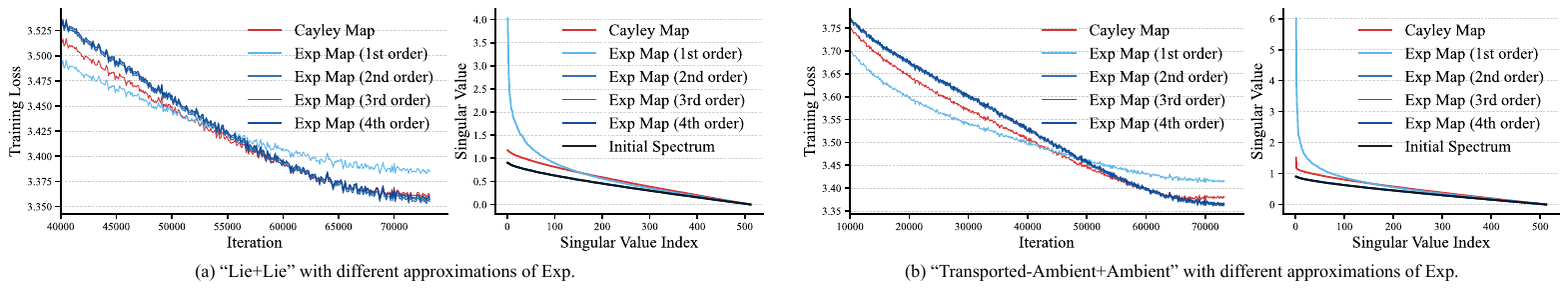}
    \caption{\scriptsize Comparison of different approximation schemes for $\mathrm{Exp}(\cdot)$. Left panels: training loss curves under each scheme for the Lie+Lie and Transported-Ambient+Ambient configurations. Right panels: final singular value spectra alongside the initial spectra for reference.}
    \label{fig:exp_approx}
\end{figure}

\subsubsection{Efficient Approximation to Matrix Exponential Map}
\label{sec:exp_approx_principle}

All Pion variants require computing the matrix exponential. Since exact evaluation of $\exp(\bm{A})$ is generally expensive, we consider two efficient approximations. The first is the Cayley transform~\citep{helfrich2018orthogonal,li2020efficient,Liu2021OPT}:
$\exp(\bm{A}) \approx \left(\bm{I}-\tfrac12\bm{A}\right)^{-1}\left(\bm{I}+\tfrac12\bm{A}\right)$, which agrees with $\exp(\bm{A})$ up to second order with error $\mathcal{O}(\|\bm{A}\|^3)$ and strictly preserves orthogonality for skew-symmetric $\bm{A}$. In practice, the matrix inverse can be further cheapened via low-order series expansions~\citep{qiu2023controlling,qiu2025reparameterized}.
The second is a truncated power series: $\exp(\bm{A})\approx \sum_{k=0}^{L}\frac{\bm{A}^k}{k!}$, whose truncation error decays rapidly with $L$ when $\|\bm{A}\|_F$ is small.

Figure~\ref{fig:exp_approx} compares first- to fourth-order power-series approximations with the Cayley transform.
The first-order approximation degrades both convergence and singular-value preservation, while the Cayley variant offers only modest gains. The second-order approximation preserves the spectrum nearly as well as higher-order variants.
Unlike conventional orthogonal-group optimization where errors in $\exp(\bm{A}_t)$ can accumulate through updates, Pion's update always starts from the identity matrix (Equation~\ref{eq:update}). This prevents repeated error compounding and improves numerical robustness, making higher-order approximations unnecessary. Pion therefore adopts a second-order exponential approximation.

\subsection{Detailed Implementation and Computational Complexity}
\label{sec:distilled_pion_optimizer}

\begin{wrapfigure}{r}{0.43\textwidth}
\vspace{-.75mm}
\begin{minipage}{0.43\textwidth}
\SetAlCapFnt{\scriptsize}
\SetAlCapNameFnt{\scriptsize}
\begin{algorithm}[H]
\caption{The Pion Optimizer (Lie Algebra)}
\label{alg:pion}
\scriptsize
\KwIn{Learning rate $\eta$, momentum coefficients $\beta_1,\beta_2$, RMS constant $c$, stability constant $\epsilon$, alternating flag, initial weight matrix $\bm{W}_0 \in \mathbb{R}^{d_{\mathrm{out}}\times d_{\mathrm{in}}}$}
\KwOut{Optimized parameter $\bm{W}_t$}
Initialize $\bm{m}^{\mathrm{in}}_0, \bm{v}^{\mathrm{in}}_0 \leftarrow \bm{0} \in \mathbb{R}^{d_{\mathrm{in}}\times d_{\mathrm{in}}}$,
$\bm{m}^{\mathrm{out}}_0, \bm{v}^{\mathrm{out}}_0 \leftarrow \bm{0} \in \mathbb{R}^{d_{\mathrm{out}}\times d_{\mathrm{out}}}$\;
Define $\mathcal{E}_2(\bm{A}, \alpha) \leftarrow \bm{I}+\eta\alpha \bm{A}+\frac{1}{2}(\eta\alpha \bm{A})^2$\;
\For{$t=1,2,\ldots$}{
    $\bm{G}_t \leftarrow \nabla_{\bm{W}} f(\bm{W}_{t-1})$\;
    $\bm{G}^{\mathrm{in}}_t \leftarrow \bm{W}_{t-1}^{\top}\bm{G}_t - \bm{G}_t^{\top}\bm{W}_{t-1}$, 
    $\bm{G}^{\mathrm{out}}_t \leftarrow \bm{G}_t\bm{W}_{t-1}^{\top} - \bm{W}_{t-1}\bm{G}_t^{\top}$\;
    $\bm{m}^{\mathrm{in}}_t \leftarrow \beta_1 \bm{m}^{\mathrm{in}}_{t-1} + (1-\beta_1)\bm{G}^{\mathrm{in}}_t$, 
    $\bm{m}^{\mathrm{out}}_t \leftarrow \beta_1 \bm{m}^{\mathrm{out}}_{t-1} + (1-\beta_1)\bm{G}^{\mathrm{out}}_t$\;
    $\bm{v}^{\mathrm{in}}_t \leftarrow \beta_2 \bm{v}^{\mathrm{in}}_{t-1} + (1-\beta_2)(\bm{G}^{\mathrm{in}}_t \odot \bm{G}^{\mathrm{in}}_t)$, 
    $\bm{v}^{\mathrm{out}}_t \leftarrow \beta_2 \bm{v}^{\mathrm{out}}_{t-1} + (1-\beta_2)(\bm{G}^{\mathrm{out}}_t \odot \bm{G}^{\mathrm{out}}_t)$\;
    $\bm{A}^{\mathrm{in}}_t \leftarrow - \bm{m}^{\mathrm{in}}_t / (\sqrt{\bm{v}^{\mathrm{in}}_t}+\epsilon)$, 
    $\bm{A}^{\mathrm{out}}_t \leftarrow - \bm{m}^{\mathrm{out}}_t / (\sqrt{\bm{v}^{\mathrm{out}}_t}+\epsilon)$\;
    \eIf{alternative update is used}{
        \eIf{$t$ is even}{
            $\alpha_t \leftarrow \frac{c\sqrt{d_{\mathrm{out}}d_{\mathrm{in}}}}{\|\bm{A}^{\mathrm{out}}_t\bm{W}_{t-1}\|_F+\epsilon}$\;
            $\bm{W}_t \leftarrow \mathcal{E}_2(\bm{A}^{\mathrm{out}}_t, \alpha_t)\bm{W}_{t-1}$ \;
        }{
            $\alpha_t \leftarrow \frac{ c\sqrt{d_{\mathrm{out}}d_{\mathrm{in}}}}{ \|\bm{W}_{t-1}\bm{A}^{\mathrm{in}}_t\|_F+\epsilon}$\;
            $\bm{W}_t \leftarrow\bm{W}_{t-1}\mathcal{E}_2(\bm{A}^{\mathrm{in}}_t, \alpha_t)$\;
        }
    }{
        $\alpha_t \leftarrow \frac{c\sqrt{d_{\mathrm{out}}d_{\mathrm{in}}}}{ \|\bm{A}^{\mathrm{out}}_t\bm{W}_{t-1}+\bm{W}_{t-1}\bm{A}^{\mathrm{in}}_t\|_F+\epsilon}$\;
        $\bm{W}_t \leftarrow \mathcal{E}_2(\bm{A}^{\mathrm{out}}_t, \alpha_t)\bm{W}_{t-1}\mathcal{E}_2(\bm{A}^{\mathrm{in}}_t, \alpha_t)$\;\vspace{-2mm}
    }}
\Return{$\bm{W}_t$}\;
\end{algorithm}
\end{minipage}
\vspace{-4mm}
\end{wrapfigure}

The previous exploration and experiments motivate four design choices.
First, scale-consistent rotational scaling is essential for stable training and large learning rates, whereas bilateral rotation balancing has a much weaker empirical effect. Second, Lie-algebra momentum better aligns with the geometry of spectrum-preserving updates than ambient-space accumulation. Third, alternate update retains most benefits of bilateral updates at substantially lower computational cost, though bilateral updates offer slightly better refinement near the final convergence. Empirically, we find that a second-order approximation of $\exp(\cdot)$ works sufficiently well for both optimization and spectrum preservation.
Based on these observations, the final Pion optimizer combines RMS-controlled scaling, Lie-algebra first-order and second-order momentum, and a second-order truncated approximation to the matrix exponential. Algorithm~\ref{alg:pion} gives the resulting optimizer steps.
Appendix~\ref{app:imp_details} presents an alternative implementation that uses transported ambient-space first-order momentum together with ambient-space second-order momentum.

\textbf{Computation Complexity}.
For a weight matrix $\bm{W}\in\mathbb{R}^{d_{\mathrm{out}}\times d_{\mathrm{in}}}$, the main overhead is constructing the input- and output-side Lie algebra gradients and applying the second-order exponential approximation. Computing
$
\bm{G}^{\mathrm{in}}=\bm{W}^\top\bm{G}-\bm{G}^\top\bm{W}
$
costs $4d_{\mathrm{out}}d_{\mathrm{in}}^2$ FLOPs, and computing
$
\bm{G}^{\mathrm{out}}=\bm{G}\bm{W}^\top-\bm{W}\bm{G}^\top
$
costs $4d_{\mathrm{out}}^2d_{\mathrm{in}}$ FLOPs. Element-wise momentum and second-moment updates are lower-order. RMS scaling requires evaluating $\bm{A}^{\mathrm{out}}\bm{W}+\bm{W}\bm{A}^{\mathrm{in}}$, costing another $2d_{\mathrm{out}}^2d_{\mathrm{in}}+2d_{\mathrm{out}}d_{\mathrm{in}}^2$ FLOPs. Applying the second-order update contributes $\mathcal{O}(d_{\mathrm{out}}^3+d_{\mathrm{in}}^3)$ FLOPs for squared Lie matrices and $2d_{\mathrm{out}}^2d_{\mathrm{in}}+2d_{\mathrm{out}}d_{\mathrm{in}}^2$ FLOPs for left and right multiplication with $\bm{W}$. Thus, the dominant additional cost of one bilateral Pion update is $\mathcal{O}(d_{\mathrm{out}}^2d_{\mathrm{in}}+d_{\mathrm{out}}d_{\mathrm{in}}^2+d_{\mathrm{out}}^3+d_{\mathrm{in}}^3)$. Alternate update can reduce the dominant update-side cost by roughly half. Compared with the baseline cost $\mathcal{O}(Bd_{\mathrm{out}}d_{\mathrm{in}})$ for the forward and backward passes of a linear layer with batch-token size $B$, the relative overhead of Pion is approximately
$\mathcal{O}(\frac{d_{\mathrm{out}}+d_{\mathrm{in}}}{B}+\frac{d_{\mathrm{out}}^3+d_{\mathrm{in}}^3}{Bd_{\mathrm{out}}d_{\mathrm{in}}})$.
In LLM pretraining, $B$ is typically large because it equals the number of tokens processed by the layer in one optimization step. Consequently, the optimizer-side matrix multiplications are amortized over a large token batch, and the overhead remains small relative to forward-backward computation. More detailed analysis and memory overhead are in Appendix~\ref{sec: computation overhead}.

\subsection{Compatibility with Maximal Update Parametrization}

$\mu$P~\citep{yang2021tuning,yang2020feature,yang2023spectral} suggests that the following two spectral scaling conditions are crucial for training stability:

\vspace{-4.5mm}
\begin{equation}
\footnotesize
\!\text{Forward spectral condition:}~\underbrace{\left\|\bm{W}\right\|_2 = \Theta\!\left(\sqrt{{d_{\mathrm{out}}}/{d_{\mathrm{in}}}}\right)}_{\text{This is inherently satisfied by Pion.}}~~~\text{Update spectral condition:}~\underbrace{\left\|\Delta \bm{W}\right\|_2 = \Theta\!\left(\sqrt{{d_{\mathrm{out}}}/{d_{\mathrm{in}}}}\right)}_{\text{This is easily satisfied by Muon.}}.
\end{equation}
\vspace{-3mm}

Existing $\mu$P-compatible optimizers~\cite{liu2025muon,team2025kimi,xie2026controlled} are built on Muon, which inherently satisfies the update condition. As a result, prior work focuses on modifying Muon to also satisfy the forward condition. Pion takes the opposite route: it inherently satisfies the forward condition, so our goal is to make it satisfy the update condition. To this end, we approximate Pion’s weight-update magnitude as $\|\Delta \bm{W}_t\|_2 \approx \eta \|\bm{W}_t\|_2 (\|\bm{G}_t^{\mathrm{out}}\|_2 + \|\bm{G}_t^{\mathrm{in}}\|_2)$, where $\|\bm{W}_t\|_2$ naturally satisfies $\mu$P's spectral condition $\Theta(\sqrt{d_{\mathrm{out}}/d_{\mathrm{in}}})$. Thus, it suffices to maintain $\|\bm{G}_t^{\mathrm{out}}\|_2=\Theta(1)$ and $\|\bm{G}_t^{\mathrm{in}}\|_2=\Theta(1)$ for Pion to be $\mu$P-compatible. This can be achieved by normalizing the spectral norms of both factors, or alternatively by orthogonalizing $\bm{G}_t^{\mathrm{out}}$ and $\bm{G}_t^{\mathrm{in}}$. We give detailed derivation and full results in Appendix~\ref{app:mup}.

\begin{wrapfigure}{r}{0.37\linewidth}
\scriptsize
\centering
\includegraphics[width=1\linewidth]{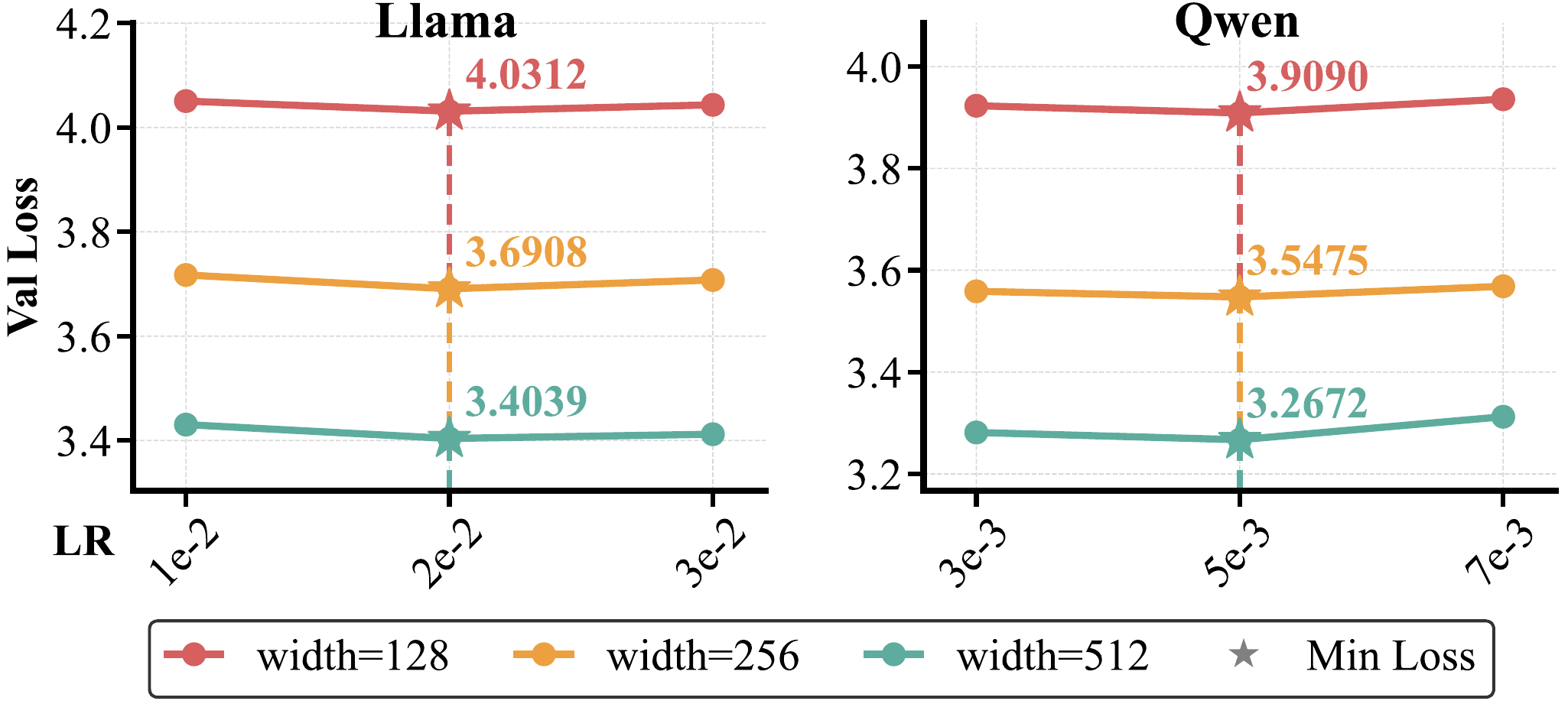}
\vspace{-4.25mm}
\caption{\scriptsize $\mu$P learning rate transfer across model scales.}
\label{fig:mup_main}
\end{wrapfigure}

To assess the $\mu$P-compatibility of Pion, we conduct a hyperparameter transfer experiment across model scales. Specifically, we consider the Pion variant that applies simple normalization to both $\|\bm{G}_t^{\mathrm{out}}\|_2$ and $\|\bm{G}_t^{\mathrm{in}}\|_2$. We defer the detailed experimental settings and extended results for the $\bm{G}_t$-orthogonalized variant to Appendix~\ref{app:mup}.
We train two representative LLM architectures (Llama and Qwen) over a range of model widths using the $\mu$P-compatible Pion, and examine whether optimal hyperparameters identified at small scale transfer reliably to larger models. Figure~\ref{fig:mup_main} shows that Pion has robust hyperparameter transferability, \ie, the optimal learning rate is invariant to model scales. More $\mu$P-compatible Pion variants remain unexplored, and our results represent only a first step forward.

\begin{figure}[h!]
    \centering
    \setlength{\abovecaptionskip}{1pt}
    \setlength{\belowcaptionskip}{5pt}
    \vspace{3.5mm}
    \includegraphics[width=\linewidth]{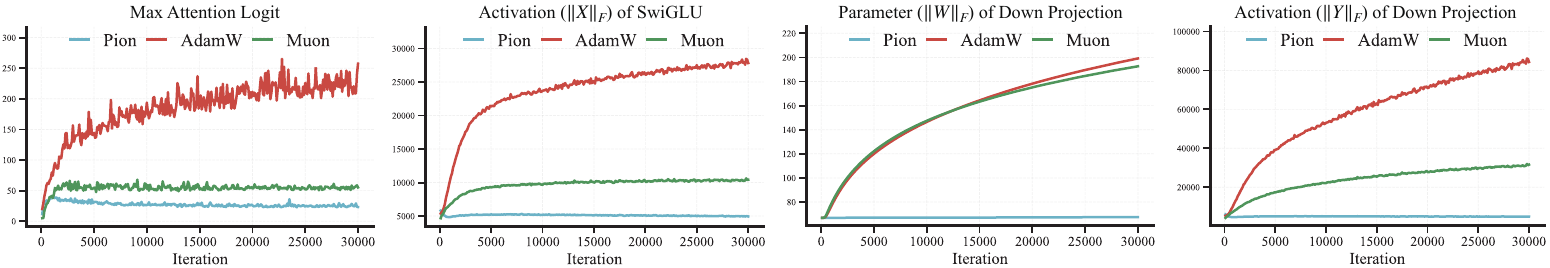}
    \caption{\scriptsize Dynamics of four diagnostic indicators for monitoring pretraining stability. From left to right, the four panels show, respectively, the maximum attention logit within the attention block of Layer 12, the norm of the input to the down-projection layer (equivalent to the SwiGLU activation output), the norm of the down-projection layer parameters, and the norm of its output. All indicators validate the strong stability of Pion.}
    \label{fig:stable_indicators}
\end{figure}

\section{Experiments and Results}

We evaluate Pion's performance and compare it with current standard optimizers (\eg, AdamW~\cite{loshchilov2019decoupled}, Muon~\cite{jordan2024muon,liu2025muon}) across diverse training scenarios, including both pretraining and post-training (supervised finetuning and reinforcement learning). All experimental details can be found in Appendix~\ref{app:exp}.

\subsection{Stable LLM Pretraining}

\setlength{\columnsep}{11pt}
\begin{wraptable}{r}[0cm]{0pt}
\scriptsize
\centering
\hspace{-2mm}
\setlength{\tabcolsep}{2pt}
\renewcommand{\arraystretch}{1.24}
    \begin{tabular}{l|ccccccccc|c}
    \specialrule{0em}{0pt}{-1pt}
        \textbf{Method} & \textbf{ARC-C} & \textbf{ARC-E} & \textbf{BoolQ} & \textbf{Hella.} & \textbf{PIQA} & \textbf{SciQ} & \textbf{TriviaQA} & \textbf{Wino.} & \textbf{Avg} & \textbf{Val Loss} \\\shline
        AdamW & 25.94 & 45.96 & 46.30 & 45.10 & 71.27 & 70.80 & 1.06 & 51.46 & 44.74 & 2.7700 \\
        Muon & 25.34 & 47.94 & 51.56 & 46.70 & \textbf{72.20} & 71.60 & 1.64 & \textbf{53.75} & 46.34 & \textbf{2.7225} \\
        \rowcolor{gray!15} {Pion} & \textbf{26.79} & \textbf{49.41} & \textbf{57.58} & \textbf{47.34} & 71.27 & \textbf{73.40} & \textbf{2.17} & 53.59 & \textbf{47.69} & 2.7350 \\\specialrule{0em}{-5.25pt}{0pt}
    \end{tabular}
    \caption{\scriptsize Benchmark performance and validation loss on LLaMA-1.3B. The best results are in \textbf{Bold}.}
    \label{tab:norm_results}
    \vspace{2mm}
\end{wraptable}
\textbf{Main results on pretraining}. %
We first investigate the stability of Pion in LLM pretraining. Specifically, we pretrain a LLaMA-based 1.3B model (same as \citep{zhao2024galore,qiu2025reparameterized,gu2026mano}) on 54B tokens, 2$\times$ the Chinchilla-optimal data budget~\cite{hoffmann2022training}. We ensure that the number of training tokens are sufficient for the model to converge. These tokens are sampled from the C4 corpus~\citep{raffel2020exploring} and preprocessed using the T5-base tokenizer with sequence length 256. This setup corresponds to roughly 400K optimization iterations, effectively simulating practice long-horizon training regimes. We compare Pion (with alternate update and Lie-Lie momentum) against two most widely used optimizers, AdamW and Muon, under the same training configuration. Main results are given in Table~\ref{tab:norm_results}. Pion achieves the best generalization performance across the evaluated benchmarks. Its validation loss is comparable to that of Muon, and both Pion and Muon yield substantially lower validation losses than AdamW.

\begin{wrapfigure}{r}{0.3\linewidth}
\scriptsize
\centering
\vspace{-0.75mm}
\includegraphics[width=1.0\linewidth]{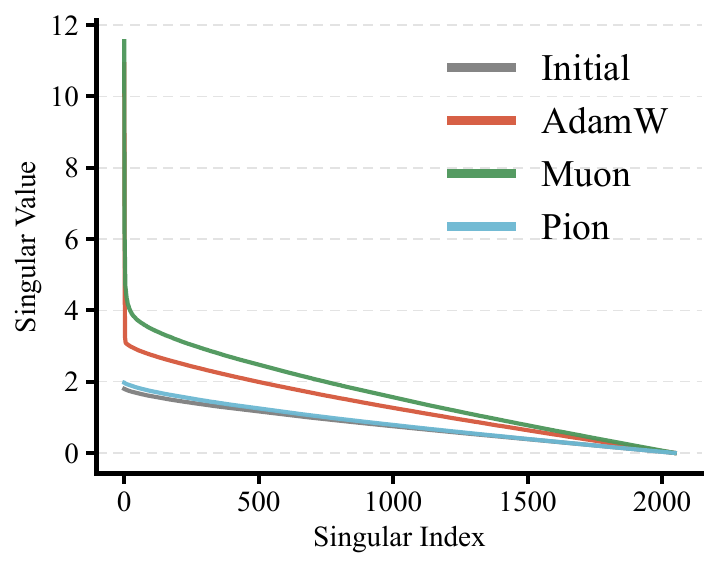}
\vspace{-6mm}
\caption{\scriptsize Weight spectrum comparison.}
\label{fig:spectrum_o}
\end{wrapfigure}

Besides validation loss, we also monitor several indicators of training stability  in Figure~\ref{fig:stable_indicators}. Specifically, we track the maximum attention logit~\citep{liu2025muon}, the norm of the SwiGLU activation $\|\bm{X}\|_F$ (also the input of the down-projection layer), the norm of down-projection weights $\|\bm{W}\|_F$, and the norm of down-projection outputs $\|\bm{Y}\|_F$, following \citep{wortsman2023small,dehghani2023scaling,rybakov2024methods}. In particular, the norm of the SwiGLU activation and the maximum attention logit have been generally recognized as an important stability indicator in large-scale pretraining~\citep{gemmateam2024gemma2improvingopen,deepseekai2025deepseekv3technicalreport,openai2025gptoss120bgptoss20bmodel,deepseekai2026deepseekv4}.
These results reveal a clear separation among optimizers. AdamW exhibits continuously growing attention logits and rapidly amplified activation magnitude. Muon substantially suppresses attention-logit growth, but its activations and down-projection norms keep increasing throughout training. In contrast, Pion keeps all monitored quantities nearly flat and quite stable. Such a distinctive training behavior demonstrates the exceptional stability of Pion's spectrum-preserving updates, implying substantial potential for stable large-scale training. Pion's training stability also stems from its well-preserved weight matrix spectra throughout the optimization process, as verified in Figure~\ref{fig:spectrum_o}. We provide the full results in Appendix~\ref{sec:more_results_in_pretraining}.

\begin{wrapfigure}{r}{0.3\linewidth}
\scriptsize
\centering
\vspace{-1mm}
\includegraphics[width=1.0\linewidth]{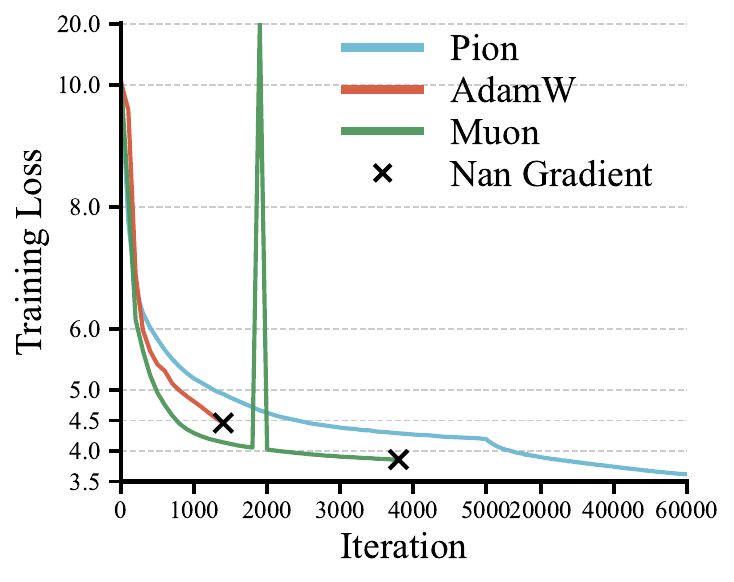}
\vspace{-5.75mm}
\caption{\scriptsize Normalization-free pretraining.}
\label{fig:nonorm}
\end{wrapfigure}

\textbf{Normalization-free pretraining}.
To stress-test Pion's training stability, we remove all normalization layers from the 60M LLaMA-based model. Because normalization layers~\citep{ba2016layer,zhang2019root} are widely regarded as essential for controlling activation scales and stabilizing gradient back-propagation, this challenging setting can effectively probe whether an optimizer alone can provide adequate scale regulation during pretraining.
Under this setting, both AdamW and Muon can make initial progress but soon fail due to gradient overflow, producing NaNs. In contrast, Pion remains stable throughout the full 9.6B-token training run and converges successfully. These results show that spectrum-preserving updates can limit signal amplification even when explicit normalization mechanisms are removed. They further suggest that spectrum-preserving optimization can partially replace architectural scale-control mechanisms, providing an optimizer-level source of training stability.

\begin{wrapfigure}{r}{0.3\linewidth}
\scriptsize
\centering
\vspace{-1mm}
\includegraphics[width=\linewidth]{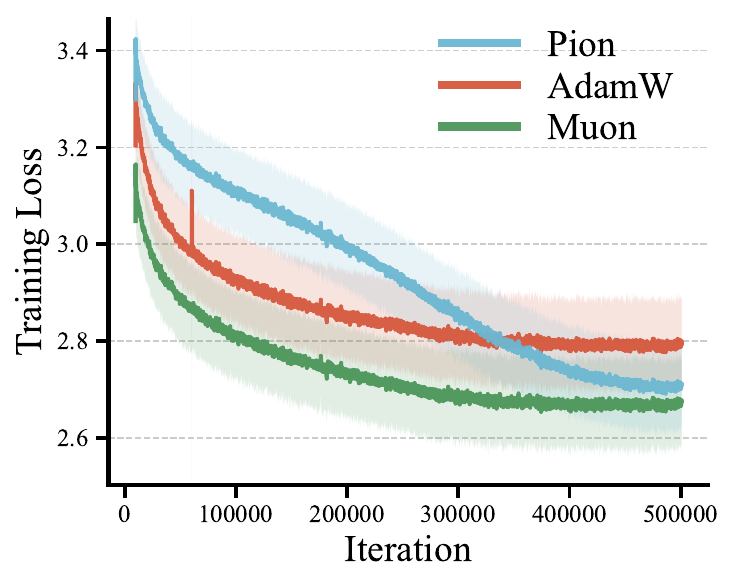}
\vspace{-6mm}
\caption{\scriptsize Training Loss of DeepNet.}
\label{fig:deepnet}
\end{wrapfigure}
\vspace{2mm}
\vspace{-0.5em}

\textbf{Pretraining on ultra-deep architectures}.
We further stress-test stability under LLMs with extreme depth. Training such networks often leads to severe optimization instabilities, including vanishing gradients and representation collapse~\citep{sun2025curse}.
To examine this setting, we scale the depth of a LLaMA 60M baseline from 8 to 200 layers and train each model on a 50B-token subset of the C4 dataset. For visual clarity, Fig.~\ref{fig:deepnet} reports the training loss after applying an $N$-step moving average, $Y_t = \frac{1}{N} \sum_{i=0}^{N-1} X_{t-i}$, where $\bm{X}_t$ and $\bm{Y}_t$ denote the raw and smoothed losses, respectively. 
We quantify stability by the mean standard deviation of the local loss trajectory, visualized as the integrated area of the shaded band.
AdamW shows the largest loss spikes and the worst overall stability. The mean standard deviations are 0.0931 for AdamW, 0.0927 for Muon, and 0.0892 for Pion, making Pion the most stable optimizer under this setting. Pion also decreases loss more rapidly in the intermediate training stages, exhibiting an effective optimization behavior even under extreme depth.

\begin{wrapfigure}{r}{0.3\linewidth}
\scriptsize
\centering
\vspace{-1mm}
\includegraphics[width=1.0\linewidth]{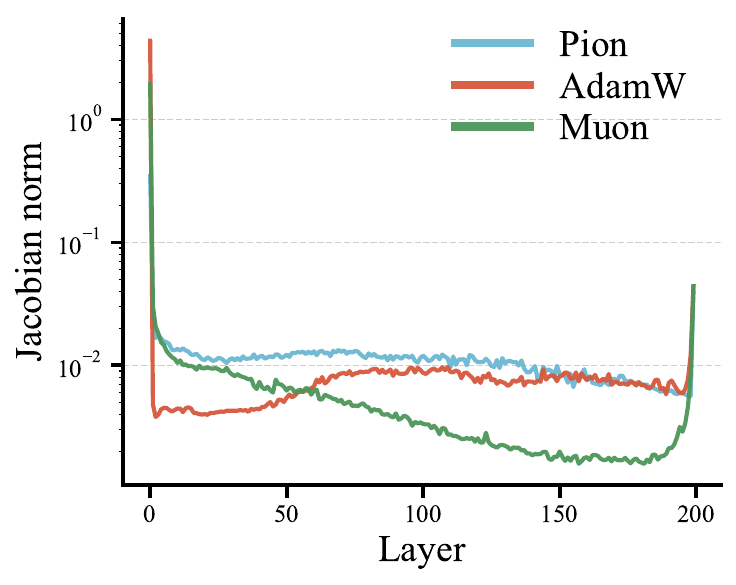}
\vspace{-6mm}
\caption{\scriptsize Jacobian Norm in DeepNet.}
\label{fig:jacobian_norm}
\vspace{2mm}
\end{wrapfigure}

To further justify our results, we analyze the layer-wise expressivity induced by different optimizers. Following the anlysis in~\citep{muhtar2026does,sun2025curse}, we quantify each layer's local geometry using a shape-normalized Frobenius distance $\Vert \bm{J}_{\ell}-\bm{I}\Vert_{F}$, where $\bm{J}_{\ell}$ is the Jacobian matrix of layer $\ell$. A larger Jacobian norm signifies a greater deviation from identity-like transport, thereby reflecting more effective expressivity. As shown in Figure~\ref{fig:jacobian_norm}, all optimizers exhibit a substantial deviation at the first layer, reflecting the architectural necessity of transforming static embeddings into highly contextualized representations. However, as the network enters the deep iterative refinement phase, the expressivity induced by different optimizers diverges significantly. 
AdamW exhibits a sharp Jacobian-norm drop in the middle layers, while Muon’s norm steadily decays with depth. In contrast, Pion leverages the full network depth, maintaining a notably uniform norm profile across layers and consistently dominating the interior. These results suggest that Pion preserves layer-wise balanced expressivity in deep networks, avoiding the expressivity degradation observed in AdamW and Muon.

\subsection{Efficient LLM Post-training}

\setlength{\columnsep}{11pt}
\begin{wraptable}{r}[0cm]{0pt}
\scriptsize
\centering
\hspace{-2mm}
\setlength{\tabcolsep}{2pt}
\renewcommand{\arraystretch}{1.24}
    \begin{tabular}{l|cccc|cccc}
    \specialrule{0em}{0pt}{-2pt}
        \multirow{3}{*}{\textbf{Method}} & \multicolumn{4}{c|}{\textbf{Qwen2.5-1.5B}} & \multicolumn{4}{c}{\textbf{Llama3.2-3B}} \\
        & \multicolumn{2}{c}{Math} & \multicolumn{2}{c|}{Code} & \multicolumn{2}{c}{Math} & \multicolumn{2}{c}{Code} \\
        & ID (\%) & OOD (\%) & ID (\%) & OOD (\%) & ID (\%) & OOD (\%) & ID (\%) & OOD (\%) \\\shline
        Base    & 59.81 & 64.83 & 35.98 & 63.99 & 25.47 & 67.59 & 26.22 & 53.08 \\
        AdamW & \textbf{65.88} & 62.13 & 51.83 & 62.64 & \textbf{59.87} & 60.86 & 46.95 & 58.64 \\
        Muon & 65.27 & 61.22 & 50.00 & 62.41 & 57.77 & \textbf{61.20} & 46.34 & 58.88 \\
        \rowcolor{gray!15} {Pion} & 65.76 & \textbf{62.16} & \textbf{53.05} & \textbf{63.21} & 58.83 & 60.44 & \textbf{47.19} & \textbf{59.74} \\\specialrule{0em}{-5.25pt}{0pt}
    \end{tabular}
    \caption{\scriptsize Performance comparison of Pion and baseline optimizers on finetuning tasks. ID (in-domain) evaluates downstream capability, while OOD (out-of-domain) measures the model’s robustness against forgetting. \textbf{Bold} values indicate the best results.}
    \label{tab:finetuning}
    \vspace{2mm}
\end{wraptable}

\textbf{Supervised finetuning}. In this section, we study the effectiveness of the Pion optimizer in the LLM supervised finetuning scenarios. Specifically, we conduct full-parameter finetuning experiments utilizing the LLaMA-Factory~\citep{zheng2024llamafactory} framework. We employ Qwen2.5-1.5B~\citep{team2024qwen2} and Llama-3.2-3B~\citep{grattafiori2024llama} as base models, fine-tuning them on the  MetaMathQA~\citep{yu2023metamath} and Magicoder-Evol-Instruct-110K~\citep{wei2024magicoder} datasets. For our evaluation protocol, following the setups in~\citep{biderman2024lora}, we analyze the stability-plasticity tradeoff~\citep{mermillod2013stability} inherent in model adaptation. We adopt GSM8K~\citep{cobbe2021training} and HumanEval~\citep{chen2021evaluating} as in-domain (ID) benchmarks for mathematical and code generation tasks respectively. Concurrently, out-of-domain (OOD) performance is assessed using ARC-Easy, ARC-Challenge~\citep{allenai:arc}, Winogrande~\citep{ai2:winogrande}, PIQA~\citep{Bisk2020} and Hellaswag~\citep{zellers2019hellaswag} benchmarks. All evaluations are conducted using the LM Evaluation Harness~\citep{eval-harness} framework with its default generation parameters. More experiment details can be found in Appendix~\ref{app:exp sft}.
The experimental results are summarized in Table \ref{tab:finetuning}. Overall, Pion achieves a highly competitive stability-plasticity tradeoff compared to established baselines such as AdamW and Muon. In particular, it shows a clear performance advantage on code generation, achieving the highest ID and OOD scores across both base models. On mathematical finetuning, Pion matches the ID performance of competing optimizers while more effectively preserving OOD capabilities, highlighting its robustness against catastrophic forgetting.

\begin{table*}[t!]
    \centering
    \scriptsize
    \renewcommand{\arraystretch}{1.2}
    \setlength{\tabcolsep}{2.9pt}
    \begin{tabular}{l|cccccc|cccccc}
    \specialrule{0em}{0pt}{-4pt}
        \multirow{4}{*}{\textbf{Method}} & \multicolumn{6}{c|}{\textbf{Qwen3-1.7B}} & \multicolumn{6}{c}{\textbf{DeepSeek-R1-Distill-Qwen-1.5B}} \\
        & \makecell{AIME24 \\ (avg@32)} & \makecell{AIME25 \\ (avg@32)} & \makecell{AMC23 \\ (avg@8)} & \makecell{Minerva \\ Math \\ (avg@4)} & \makecell{Olympiad \\ Bench \\ (avg@8)} & Avg 
        & \makecell{AIME24 \\ (avg@32)} & \makecell{AIME25 \\ (avg@32)} & \makecell{AMC23 \\ (avg@8)} & \makecell{Minerva \\ Math \\ (avg@4)} & \makecell{Olympiad \\ Bench \\ (avg@8)} & Avg \\\shline
        Base    & 4.06 & 10.10 & 30.27 & 16.27 & 23.67 & 16.87 & 20.52 & 20.83 & 54.06 & 19.39 & 36.20 & 30.20 \\
        AdamW & 22.71 & 20.94 & 58.43 & 25.91 & 46.09 & 34.82 & 25.42 & 23.94 & 62.65 & 23.16 & 44.69 & 35.97 \\
        Muon & 20.42 & 19.27 & 54.22 & 24.08 & 42.41 & 32.08 & 29.06 & 23.33 & 66.72 & 22.89 & 44.61 & 37.32 \\
        \rowcolor{gray!15} {Pion} & \textbf{25.42} & \textbf{21.98} & \textbf{59.94} & \textbf{26.84} & \textbf{46.43} & \textbf{36.12} & \textbf{30.00} & \textbf{24.38} & \textbf{66.87} & \textbf{23.90} & \textbf{46.43} & \textbf{38.32} \\ \specialrule{0em}{-5.25pt}{0pt}
    \end{tabular}
    \caption{\scriptsize Performance comparison of Pion and other baseline optimizers on RLVR tasks (training with GRPO). The metric avg@K denotes the average accuracy across K generated samples per problem. \textbf{Bold} values indicate the best overall average results.}
    \label{tab:rlvr}
    \vspace{-2.5mm}
\end{table*}

\textbf{Reinforcement learning with verifiable reward (RLVR)}.
We study Pion as an optimizer for RLVR. Our motivation comes from recent observations~\citep{zhu2025path} that RLVR updates largely preserve the spectral structure of pretrained weight matrices, suggesting that RLVR may benefit from optimizers whose update geometry aligns with the underlying matrix structure. Because Pion preserves the weight spectrum during optimization, it is naturally suitable for RLVR training. We therefore compare Pion against AdamW and Muon to assess whether it can improve the performance of RLVR.

\begin{wrapfigure}{r}{0.39\linewidth}
\scriptsize
\centering
\vspace{-.75mm}
\includegraphics[width=1.0\linewidth]{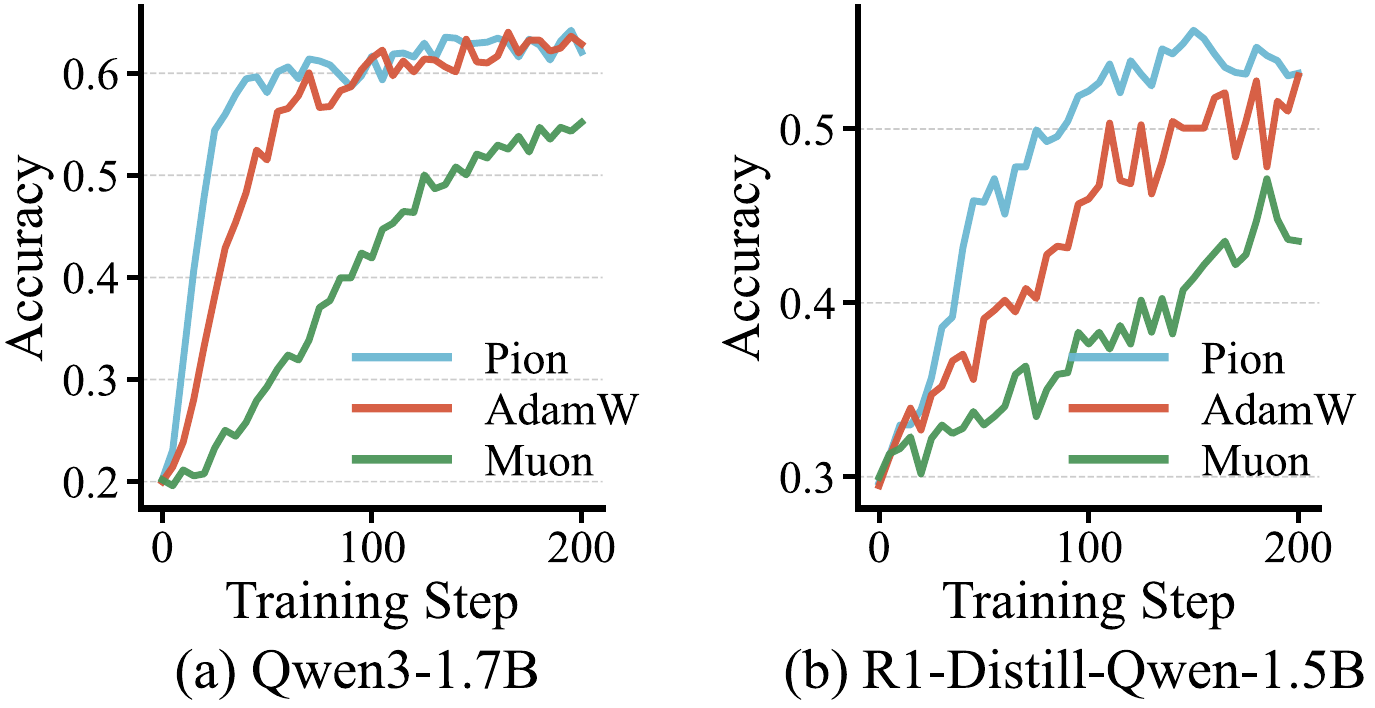}
\vspace{-4.5mm}
\caption{\scriptsize Training dynamics of evaluation accuracy. Pion demonstrates the fastest convergence rate among all.}
\label{fig:rl acc}
\vspace{1mm}
\end{wrapfigure}

We implement the RLVR training pipeline with the VeRL framework~\citep{sheng2024hybridflow} and adopt Group Relative Policy Optimization (GRPO)~\citep{shao2024deepseekmath}. Experiments are conducted on two base models, Qwen3-1.7B~\citep{yang2025qwen3} and DeepSeek-R1-Distill-Qwen-1.5B~\citep{guo2025deepseek}, using DeepMath~\citep{he2025deepmath} as the training dataset. We evaluate the trained models on five mathematical reasoning benchmarks: AIME24~\citep{AIME24}, AIME25~\citep{AIME25}, AMC23~\citep{AMC23}, Minerva Math~\citep{lewkowycz2022solving}, and OlympiadBench~\citep{he2024olympiadbench}. The maximum generation length is set to 4096 tokens for Qwen3-1.7B and 8192 tokens for DeepSeek-R1-Distill-Qwen-1.5B. All evaluations are performed using POLARIS~\citep{Polaris2025}. Details and more results are provided in Appendix~\ref{app:exp rl}.
As shown in Table~\ref{tab:rlvr}, Pion achieves the best performance in both settings. These results support the premise that RLVR dynamics preserve pretrained spectral structures, making Pion a well-suited inductive bias for RLVR. Figure~\ref{fig:rl acc} further shows Pion's faster convergence in validation accuracy.

\section{Related Work and Concluding Remarks}

\textbf{Related work}. 
Stable LLM training relies on a delicate interplay between optimization dynamics and scale control. A common recipe combines adaptive optimizers such as AdamW~\cite{kingma2014adam} with normalization layers~\cite{ba2016layer,henry2020query} or scale-aware parameterizations such as maximal update parameterization~\cite{yang2020feature,yang2023spectral}. Recent work has further shown that matrix-aware optimization can substantially improve training stability by exploiting the structure of weight matrices~\cite{vyas2024soap,gupta2018shampoo,tuddenham2022orthogonalising,pethick2025training,jordan2024muon,carlson2015stochastic,carlson2015stochastic2,bernstein2024old,bernstein2024modular}. In particular, Muon~\cite{jordan2024muon} and its variants~\cite{liu2025muon,li2025normuon,team2025kimi,xie2026controlled,amsel2025polar,chen2025muon} have attracted considerable attention for their empirical effectiveness.
Our work shares the goal of stable training, but approaches it from a distinct geometric perspective. Building on ideas from orthogonal group optimization~\cite{lezcano2019cheap,mhammedi2017efficient} and the geometric inductive biases of POET~\cite{qiu2026poetx,qiu2025reparameterized}, Pion elevates spectrum preservation to an optimizer-level principle. Rather than enforcing orthogonality through reparameterization, Pion directly preserves the spectral geometry of weight matrices during optimization. This view also offers a natural path to minimum hyperspherical energy~\cite{liu2018mhe,Liu2021OPT,Liu2021SphereUni}, while retaining sufficient flexibility for effective training.

\textbf{Concluding remarks}. 
We introduce Pion, a spectrum-preserving optimizer for stable training via orthogonal equivalence transformations. Pion provably preserves the weight spectrum and maintains minimal hyperspherical energy throughout training without relying on explicit reparameterization. Empirical results have demonstrated Pion's effectiveness, making it competitive with standard optimizers across diverse settings, including pretraining, supervised finetuning, and reinforcement learning.

\bibliographystyle{plain}
\bibliography{reference}

\clearpage

\newpage
\appendix
\onecolumn
\addcontentsline{toc}{section}{Appendix}
\renewcommand \thepart{}
\renewcommand \partname{}
\part{\Large{\centerline{Appendix}}}
\parttoc

\newpage

\newpage
\section{Geometric Structure of Pion's Update}
\label{app:pion_geometry_derivation}

We provide the detailed derivation of Proposition~\ref{prop:total_rotation}. Recall that Pion updates a matrix parameter as
\begin{equation}
    \bm{W}_{t+1}
    =
    \bm{R}_t\bm{W}_t\bm{P}_t,
    \qquad
    \bm{R}_t=\exp(-\eta\bm{G}^{\mathrm{out}}_t),
    \quad
    \bm{P}_t=\exp(-\eta\bm{G}^{\mathrm{in}}_t).
\end{equation}
Since $\bm{G}^{\mathrm{out}}_t$ and $\bm{G}^{\mathrm{in}}_t$ are skew-symmetric, $\bm{R}_t$ and $\bm{P}_t$ are orthogonal. Therefore, if $\bm{W}_t=\bm{U}_t\bm{\Sigma}_0\bm{V}_t^\top$ is a singular value decomposition, then
\begin{equation}
    \bm{W}_{t+1}
    =
    (\bm{R}_t\bm{U}_t)
    \bm{\Sigma}_0
    (\bm{P}_t^\top\bm{V}_t)^\top .
\end{equation}
This is again a valid singular value decomposition up to orthogonal basis changes, so the singular values $\bm{\Sigma}_0$ are preserved. The update hence only rotates the left and right singular subspaces of $\bm{W}_t$.

This also explains why $\|\Delta\bm{W}\|_F$, with $\Delta\bm{W}=\bm{W}_{t+1}-\bm{W}_t$, measures the total strength of the update. Orthogonal multiplication preserves vector norms, so the update does not scale the row or column vectors of $\bm{W}_t$; it changes only their directions. Thus, the Frobenius norm of the displacement reflects the aggregate angular movement induced by the two rotations.

We next make the planar-rotation structure explicit. For simplicity, consider the one-sided update
\begin{equation}
    \bm{W}_{t+1}
    =
    \bm{W}_t\exp(-\eta\bm{G}^{\mathrm{in}}_t).
\end{equation}
Because $\bm{G}^{\mathrm{in}}_t$ is real skew-symmetric, there exists an orthogonal matrix $\bm{Q}\in O(d_{\mathrm{in}})$ such that
\begin{equation}
    \bm{G}^{\mathrm{in}}_t
    =
    \bm{Q}
    \mathrm{diag}\left(
    \begin{pmatrix}0&\theta_1\\-\theta_1&0\end{pmatrix},
    \cdots,
    \begin{pmatrix}0&\theta_m\\-\theta_m&0\end{pmatrix},
    \cdots
    \right)
    \bm{Q}^\top
    =
    \bm{Q}\bm{\Sigma}\bm{Q}^\top .
\end{equation}
Using $\bm{W}_t=\bm{U}_t\bm{\Sigma}_0\bm{V}_t^\top$, we obtain
\begin{equation}
    \bm{W}_t\exp(-\eta\bm{G}^{\mathrm{in}}_t)
    =
    \bm{U}_t\bm{\Sigma}_0\bm{V}_t^\top
    \bm{Q}
    \exp(-\eta\bm{\Sigma})
    \bm{Q}^\top .
\end{equation}
The matrix $\exp(-\eta\bm{\Sigma})$ is block diagonal, and each $2\times2$ block has the form
\begin{equation}
    \exp\left(
    -\eta
    \begin{pmatrix}0&\theta_j\\-\theta_j&0\end{pmatrix}
    \right),
\end{equation}
which is a planar rotation with angle determined by $-\eta\theta_j$. Hence the right-side update first represents the right singular space in the orthogonal basis induced by $\bm{G}^{\mathrm{in}}_t$, and then applies independent planar rotations within the corresponding $2$D invariant subspaces.

The Frobenius and spectral norms of the Lie algebra element quantify these rotations:
\begin{equation}
    \|-\eta\bm{G}^{\mathrm{in}}_t\|_F
    =
    \eta\sqrt{2\sum_{j=1}^{\lfloor d_{\mathrm{in}}/2\rfloor}\theta_j^2},
    \qquad
    \|-\eta\bm{G}^{\mathrm{in}}_t\|_2
    =
    \eta\max_j |\theta_j|.
\end{equation}
Thus, $\|\bm{G}^{\mathrm{in}}_t\|_F/\sqrt{d_{\mathrm{in}}}$ characterizes the average rotation magnitude on the input side up to a constant factor, while $\|\bm{G}^{\mathrm{in}}_t\|_2$ controls the maximum angle. The same argument applies to $\bm{G}^{\mathrm{out}}_t$ for the output-side update.

\newpage
\section{Convergence Analysis}
\label{app:convergence}
Before presenting the convergence analysis, we first introduce several basic characterizations of the geometry induced by spectrum-preserving updates in Equation \ref{eq:update}. As this update preserves the spectrum of the weight matrices, idealized trajectory stays on the set of matrices sharing the same singular values as the initialization. We denote this set by $\mathcal{M}_{\bm{W}_0}$.

\begin{lemma}[Characterization of the Isospectral Manifold]
Let $\bm{W}_0 \in \mathbb{R}^{m \times n}$ be the initial weight matrix. The isospectral manifold passing through $\bm{W}_0$ is
\begin{equation}
    \mathcal{M}_{\bm{W}_0}
    =
    \left\{
    \bm{U}\bm{W}_0\bm{V}^\top
    \mid
    \bm{U}\in \mathrm{O}(m),\,
    \bm{V}\in \mathrm{O}(n)
    \right\}.
\end{equation}
\label{lemma:character-isospectral-manifold}
\end{lemma}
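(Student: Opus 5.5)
We prove the lemma by identifying the ``isospectral set'' with the orbit of $\bm{W}_0$ under the two-sided orthogonal action, and then establishing the two inclusions. The isospectral set is $\{\bm{W}\in\mathbb{R}^{m\times n} : \sigma(\bm{W})=\sigma(\bm{W}_0)\}$, where singular values are counted with multiplicity. The manifold structure is then immediate once the set is written as a group orbit, so the main content is the set equality.

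\textbf{Inclusion $\supseteq$.} Fix $\bm{U}\in\mathrm{O}(m)$ and $\bm{V}\in\mathrm{O}(n)$, and write an SVD $\bm{W}_0=\bm{U}_0\bm{\Sigma}_0\bm{V}_0^\top$. Then
\[
\bm{U}\bm{W}_0\bm{V}^\top=(\bm{U}\bm{U}_0)\bm{\Sigma}_0(\bm{V}\bm{V}_0)^\top,
\]
and $\bm{U}\bm{U}_0$ and $\bm{V}\bm{V}_0$ are orthogonal. This is therefore a valid SVD with the same $\bm{\Sigma}_0$, exactly as in the argument of Appendix~\ref{app:pion_geometry_derivation}. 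An alternative check uses the Gram matrix:
\[
(\bm{U}\bm{W}_0\bm{V}^\top)^\top(\bm{U}\bm{W}_0\bm{V}^\top)=\bm{V}\,\bm{W}_0^\top\bm{W}_0\,\bm{V}^\top .
\]
This is similar to $\bm{W}_0^\top\bm{W}_0$, so it has the same eigenvalues, and hence the same singular values.

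\textbf{Inclusion $\subseteq$.} Let $\bm{W}$ have the same singular values as $\bm{W}_0$, with multiplicities. Take SVDs $\bm{W}=\bm{U}_1\bm{\Sigma}\bm{V}_1^\top$ and $\bm{W}_0=\bm{U}_0\bm{\Sigma}\bm{V}_0^\top$, choosing both so that the $m\times n$ rectangular diagonal $\bm{\Sigma}$ lists the values in the same (descending) order. Then
\[
\bm{W}=\bm{U}_1\bm{U}_0^\top\,\bm{W}_0\,\bm{V}_0\bm{V}_1^\top ,
\]
so we may set $\bm{U}=\bm{U}_1\bm{U}_0^\top\in\mathrm{O}(m)$ and $\bm{V}=\bm{V}_1\bm{V}_0^\top\in\mathrm{O}(n)$. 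The only delicate point is rank deficiency, that is, zero singular values or $m\neq n$. The full SVD with square orthogonal factors handles this automatically, since the padding columns are arbitrary orthonormal completions. This is why the full SVD, not the thin one, should be used.

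\textbf{Manifold structure.} Finally, $\mathcal{M}_{\bm{W}_0}$ is the orbit of $\bm{W}_0$ under the smooth action of the compact Lie group $\mathrm{O}(m)\times\mathrm{O}(n)$ given by $(\bm{U},\bm{V})\cdot\bm{W}=\bm{U}\bm{W}\bm{V}^\top$. Orbits of compact Lie group actions are embedded closed submanifolds, diffeomorphic to $G/\mathrm{Stab}(\bm{W}_0)$. Differentiating the action at the identity gives the tangent space
\[
T_{\bm{W}}\mathcal{M}=\{\bm{\Omega}\bm{W}+\bm{W}\bm{\Xi} : \bm{\Omega}^\top=-\bm{\Omega},\ \bm{\Xi}^\top=-\bm{\Xi}\},
\]
which is consistent with the Lie-algebra form of the update~\eqref{eq:update} and will be useful in the convergence proof.
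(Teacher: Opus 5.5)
Your proposal is correct and follows the same route as the paper: both directions come from SVDs of $\bm{W}$ and $\bm{W}_0$ sharing the same $\bm{\Sigma}$. One direction substitutes $\bm{\Sigma}=\bm{U}_0^\top\bm{W}_0\bm{V}_0$, and the other reads $(\bm{U}\bm{U}_0)\bm{\Sigma}(\bm{V}\bm{V}_0)^\top$ as a valid SVD. Your extra remarks go beyond the paper, which proves only the set equality: the full-SVD caveat, the compact-group orbit argument, and the tangent-space computation actually supply the justification for the paper's tangent-space lemma, which it states without proof.
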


\begin{proof}
Let $\bm{W}_0=\bm{U}_0\bm{\Sigma}\bm{V}_0^\top$ be the singular value decomposition of $\bm{W}_0$.

First, if $\bm{W}\in \mathcal{M}_{\bm{W}_0}$, then $\bm{W}$ has the same singular values as $\bm{W}_0$. Hence we can write $\bm{W}=\bm{U}_w\bm{\Sigma}\bm{V}_w^\top$. Substituting $\bm{\Sigma}=\bm{U}_0^\top\bm{W}_0\bm{V}_0$ gives
\begin{equation}
    \bm{W}
    =
    \bm{U}_w\bm{U}_0^\top
    \bm{W}_0
    \bm{V}_0\bm{V}_w^\top
    =
    \bm{U}\bm{W}_0\bm{V}^\top,
\end{equation}
where $\bm{U}=\bm{U}_w\bm{U}_0^\top\in \mathrm{O}(m)$ and
$\bm{V}=\bm{V}_w\bm{V}_0^\top\in \mathrm{O}(n)$.

Conversely, if $\bm{W}=\bm{U}\bm{W}_0\bm{V}^\top$ with orthogonal $\bm{U}$ and $\bm{V}$, then
\begin{equation}
    \bm{W}
    =
    (\bm{U}\bm{U}_0)\bm{\Sigma}(\bm{V}\bm{V}_0)^\top ,
\end{equation}
which is a valid SVD with the same singular values as $\bm{W}_0$. Thus $\bm{W}\in\mathcal{M}_{\bm{W}_0}$.
\end{proof}

\begin{lemma}[Tangent Space of the Isospectral Manifold]
For any $\bm{W}\in \mathcal{M}_{\bm{W}_0}$, the tangent space is
\begin{equation}
    T_{\bm{W}}\mathcal{M}_{\bm{W}_0}
    =
    \left\{
    \bm{G}_{\mathrm{out}}\bm{W}
    +
    \bm{W}\bm{G}_{\mathrm{in}}
    \mid
    \bm{G}_{\mathrm{out}}\in \mathfrak{so}(m),\,
    \bm{G}_{\mathrm{in}}\in \mathfrak{so}(n)
    \right\},
\end{equation}
where $\mathfrak{so}(k)=\{\bm{G}\in\mathbb{R}^{k\times k}\mid \bm{G}^\top=-\bm{G}\}$.
\label{lemma:tangent-space-isospectral}
\end{lemma}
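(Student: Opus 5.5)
The plan is to identify $\mathcal{M}_{\bm{W}_0}$ as an orbit of a compact Lie group action and compute its tangent space as the image of the differential of the orbit map. By Lemma~\ref{lemma:character-isospectral-manifold}, $\mathcal{M}_{\bm{W}_0}$ is the orbit of $\bm{W}_0$ under the smooth action of $G=\mathrm{O}(m)\times\mathrm{O}(n)$ on $\mathbb{R}^{m\times n}$ given by $(\bm{U},\bm{V})\cdot\bm{W}=\bm{U}\bm{W}\bm{V}^\top$. If $\bm{W}=\bm{U}_1\bm{W}_0\bm{V}_1^\top\in\mathcal{M}_{\bm{W}_0}$, then the orbit of $\bm{W}$ is the same set, because $\bm{U}\bm{W}\bm{V}^\top=(\bm{U}\bm{U}_1)\bm{W}_0(\bm{V}\bm{V}_1)^\top$. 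So it suffices to compute the tangent space of the orbit at its own base point $\bm{W}$, using the orbit map $\Phi_{\bm{W}}:G\to\mathcal{M}_{\bm{W}_0}$, $\Phi_{\bm{W}}(\bm{U},\bm{V})=\bm{U}\bm{W}\bm{V}^\top$.

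\textbf{Inclusion $\supseteq$.} Take any $\bm{G}_{\mathrm{out}}\in\mathfrak{so}(m)$ and $\bm{G}_{\mathrm{in}}\in\mathfrak{so}(n)$, and consider the curve
\begin{equation*}
\gamma(s)=\exp(s\bm{G}_{\mathrm{out}})\,\bm{W}\,\exp(s\bm{G}_{\mathrm{in}}).
\end{equation*}
Since $\exp(s\bm{G}_{\mathrm{in}})=\exp(-s\bm{G}_{\mathrm{in}})^\top$ with $\exp(-s\bm{G}_{\mathrm{in}})\in\mathrm{O}(n)$, the curve lies in the orbit. Its velocity at $s=0$ is $\gamma'(0)=\bm{G}_{\mathrm{out}}\bm{W}+\bm{W}\bm{G}_{\mathrm{in}}$. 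Hence every element of the claimed set is a tangent vector.

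\textbf{Inclusion $\subseteq$.} Here I will invoke the standard fact that an orbit of a compact Lie group acting smoothly is an embedded submanifold diffeomorphic to $G/G_{\bm{W}}$, and that the orbit map $\Phi_{\bm{W}}$ is a submersion onto it (equivariant rank theorem). Consequently
\begin{equation*}
T_{\bm{W}}\mathcal{M}_{\bm{W}_0}=d\Phi_{\bm{W}}(\bm{I},\bm{I})\big(\mathfrak{so}(m)\times\mathfrak{so}(n)\big).
\end{equation*}
Equivalently, any smooth curve in $\mathcal{M}_{\bm{W}_0}$ through $\bm{W}$ lifts locally to a curve $(\bm{U}(s),\bm{V}(s))$ in $G$ with $\bm{U}(0)=\bm{I}$ and $\bm{V}(0)=\bm{I}$. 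Differentiating $\bm{U}(s)\bm{W}\bm{V}(s)^\top$ at $s=0$ gives $\bm{U}'(0)\bm{W}+\bm{W}\bm{V}'(0)^\top$. Differentiating $\bm{U}^\top\bm{U}=\bm{I}$ shows $\bm{U}'(0)\in\mathfrak{so}(m)$, and likewise $\bm{V}'(0)\in\mathfrak{so}(n)$, so $\bm{V}'(0)^\top\in\mathfrak{so}(n)$ as well. This yields exactly the claimed form with $\bm{G}_{\mathrm{out}}=\bm{U}'(0)$ and $\bm{G}_{\mathrm{in}}=\bm{V}'(0)^\top$.

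\textbf{Main obstacle.} The only nontrivial step is justifying that $\mathcal{M}_{\bm{W}_0}$ is a genuine embedded submanifold and that curves in it lift to $G$. This is needed because the inclusion $\subseteq$ can fail for an immersed orbit with a bad topology. I would handle it by citing compactness of $G$: an injective immersion $G/G_{\bm{W}}\to\mathbb{R}^{m\times n}$ from a compact space is an embedding. The submersion property then follows from the constant-rank theorem applied to the equivariant map $\Phi_{\bm{W}}$. Everything else is a one-line differentiation.
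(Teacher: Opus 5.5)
Your proof is correct, and there is no competing argument in the paper to compare it with: the paper states this lemma without proof and immediately uses it in Lemma~\ref{lemma:first-order-condition-pion}. Your orbit-map route is the standard way to fill that gap. Compactness of $\mathrm{O}(m)\times\mathrm{O}(n)$ makes the orbit an embedded submanifold diffeomorphic to $G/G_{\bm{W}}$. The orbit map $\Phi_{\bm{W}}$ is then a submersion onto it. Finally, $d\Phi_{\bm{W}}(\bm{I},\bm{I})(\bm{A},\bm{B})=\bm{A}\bm{W}+\bm{W}\bm{B}^\top$ with $\bm{B}^\top\in\mathfrak{so}(n)$ gives exactly the claimed set.

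Your approach buys two things. First, it proves the lemma for every $\bm{W}_0$, including repeated or zero singular values, without the ``distinct and nonzero singular values'' assumption the paper makes elsewhere. An argument that instead differentiates a path of SVDs would have to handle the non-uniqueness and smooth choice of singular vectors when singular values coincide or vanish; your argument avoids this entirely. Second, it establishes both inclusions, and the stationarity lemma needs both. The $\supseteq$ direction is what lets criticality (orthogonality of the gradient to every $\bm{G}_{\mathrm{out}}\bm{W}+\bm{W}\bm{G}_{\mathrm{in}}$) force the two skew-symmetric parts to vanish. The $\subseteq$ direction is what makes their vanishing sufficient for criticality.
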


\begin{lemma}[First-Order Stationarity on $\mathcal{M}_{\bm{W}_0}$]
Let $f:\mathbb{R}^{m\times n}\to \mathbb{R}$ be smooth and let $\bm{G}=\nabla f(\bm{W})$. A point $\bm{W}\in \mathcal{M}_{\bm{W}_0}$ is a first-order critical point of $f$ restricted to $\mathcal{M}_{\bm{W}_0}$ if and only if
\begin{equation}
    \bm{G}_{\mathrm{in}}
    =
    \bm{W}^\top\bm{G}-\bm{G}^\top\bm{W}
    =
    \bm{0},
    \qquad
    \bm{G}_{\mathrm{out}}
    =
    \bm{G}\bm{W}^\top-\bm{W}\bm{G}^\top
    =
    \bm{0}.
\end{equation}
\label{lemma:first-order-condition-pion}
\end{lemma}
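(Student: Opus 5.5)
The plan is to prove this lemma by computing the Riemannian gradient with respect to the Euclidean (Frobenius) metric restricted to $\mathcal{M}_{\bm{W}_0}$. A point $\bm{W}$ is first-order critical exactly when $\langle \bm{G},\bm{\Xi}\rangle_F=0$ for every tangent vector $\bm{\Xi}\in T_{\bm{W}}\mathcal{M}_{\bm{W}_0}$. Equivalently, $\frac{d}{ds}f(\gamma(s))|_{s=0}=0$ for every smooth curve $\gamma$ in $\mathcal{M}_{\bm{W}_0}$ through $\bm{W}$. By Lemma~\ref{lemma:tangent-space-isospectral}, every tangent vector has the form $\bm{A}\bm{W}+\bm{W}\bm{B}$ with $\bm{A}\in\mathfrak{so}(m)$ and $\bm{B}\in\mathfrak{so}(n)$. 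Each such vector is realized by the curve $\gamma(s)=\exp(s\bm{A})\bm{W}\exp(s\bm{B})$, which stays in $\mathcal{M}_{\bm{W}_0}$ by Lemma~\ref{lemma:character-isospectral-manifold}. So criticality is equivalent to
\[
\langle \bm{G},\bm{A}\bm{W}\rangle_F+\langle \bm{G},\bm{W}\bm{B}\rangle_F=0\quad\text{for all }\bm{A}\in\mathfrak{so}(m),\ \bm{B}\in\mathfrak{so}(n).
\]
Because $\bm{A}$ and $\bm{B}$ vary independently, and we may set either one to zero, this is equivalent to two separate conditions: $\langle \bm{G}\bm{W}^\top,\bm{A}\rangle_F=0$ for all skew $\bm{A}$, and $\langle \bm{W}^\top\bm{G},\bm{B}\rangle_F=0$ for all skew $\bm{B}$. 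Both follow from trace cyclicity: $\mathrm{tr}(\bm{G}^\top\bm{A}\bm{W})=\mathrm{tr}((\bm{G}\bm{W}^\top)^\top\bm{A})$, and similarly on the other side.

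The key step is the elementary fact that a square matrix $\bm{M}$ is Frobenius-orthogonal to all of $\mathfrak{so}(k)$ if and only if its skew part vanishes, i.e. $\bm{M}-\bm{M}^\top=\bm{0}$. This holds because $\mathbb{R}^{k\times k}=\mathrm{Sym}(k)\oplus\mathfrak{so}(k)$ is an orthogonal decomposition, and the projection of $\bm{M}$ onto $\mathfrak{so}(k)$ is $\tfrac12(\bm{M}-\bm{M}^\top)$. Applying this fact with $\bm{M}=\bm{G}\bm{W}^\top$ gives $\bm{G}_{\mathrm{out}}=\bm{G}\bm{W}^\top-\bm{W}\bm{G}^\top=\bm{0}$. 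Applying it with $\bm{M}=\bm{W}^\top\bm{G}$ gives $\bm{G}_{\mathrm{in}}=\bm{W}^\top\bm{G}-\bm{G}^\top\bm{W}=\bm{0}$.

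For the converse direction, if both skew parts vanish then both inner products are zero for every $\bm{A}$ and $\bm{B}$. Hence the directional derivative vanishes along every tangent direction, and $\bm{W}$ is critical.

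I also want to remark that the Riemannian gradient is the projection of $\bm{G}$ onto the tangent space. Since the map $(\bm{A},\bm{B})\mapsto\bm{A}\bm{W}+\bm{W}\bm{B}$ is surjective onto the tangent space, vanishing of that projection is the same as vanishing of the pairing with every image element, so the characterization does not depend on redundancy in the parametrization.

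The main subtlety is that the argument relies on Lemma~\ref{lemma:tangent-space-isospectral} being exact, namely that every tangent vector arises as $\bm{A}\bm{W}+\bm{W}\bm{B}$. This matters when $\mathcal{M}_{\bm{W}_0}$ is a genuine manifold, for instance under the distinct nonzero singular values assumption used earlier. Since that lemma is taken as given, the obstacle reduces to checking that the orbit curves $\exp(s\bm{A})\bm{W}\exp(s\bm{B})$ produce all of these tangent directions, which is immediate from differentiating at $s=0$.
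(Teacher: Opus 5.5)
Your proposal is correct and follows essentially the same route as the paper: you pair $\bm{G}$ with the tangent vectors $\bm{A}\bm{W}+\bm{W}\bm{B}$ from Lemma~\ref{lemma:tangent-space-isospectral}, decouple the two skew factors, move them out by trace cyclicity, and use that the Frobenius-orthogonal complement of $\mathfrak{so}(k)$ is the space of symmetric matrices. Your extra curve construction and the caveat about distinct singular values are harmless but not needed. The set is an orbit of the compact group $\mathrm{O}(m)\times\mathrm{O}(n)$, so it is always an embedded submanifold whose tangent space is exactly that image.
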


\begin{proof}
By the first-order optimality condition on a smooth embedded manifold, $\bm{W}$ is stationary if and only if
\begin{equation}
    \langle \bm{G}, \delta \bm{W}\rangle = 0,
    \qquad
    \forall \delta \bm{W}\in T_{\bm{W}}\mathcal{M}_{\bm{W}_0}.
\end{equation}
Using Lemma~\ref{lemma:tangent-space-isospectral}, any tangent vector can be written as
$\delta \bm{W}=\bm{G}_{m}\bm{W}+\bm{W}\bm{G}_{n}$, where
$\bm{G}_{m}\in\mathfrak{so}(m)$ and
$\bm{G}_{n}\in\mathfrak{so}(n)$. Therefore,
\begin{equation}
    \operatorname{Tr}\!\left(
    \bm{G}^\top
    (
    \bm{G}_{m}\bm{W}
    +
    \bm{W}\bm{G}_{n}
    )
    \right)
    =
    0
\end{equation}
for all skew-symmetric $\bm{G}_{m}$ and $\bm{G}_{n}$.

Since $\bm{G}_{m}$ and $\bm{G}_{n}$ vary independently, this is equivalent to
\begin{equation}
    \operatorname{Tr}(\bm{W}\bm{G}^\top\bm{G}_{m})=0,
    \quad
    \forall \bm{G}_{m}\in\mathfrak{so}(m),
\end{equation}
and
\begin{equation}
    \operatorname{Tr}(\bm{G}^\top\bm{W}\bm{G}_{n})=0,
    \quad
    \forall \bm{G}_{n}\in\mathfrak{so}(n).
\end{equation}
The orthogonal complement of skew-symmetric matrices under the Frobenius inner product is the space of symmetric matrices. Hence $\bm{W}\bm{G}^\top$ and $\bm{G}^\top\bm{W}$ must be symmetric, which gives
\begin{equation}
    \bm{G}\bm{W}^\top-\bm{W}\bm{G}^\top=\bm{0},
    \qquad
    \bm{W}^\top\bm{G}-\bm{G}^\top\bm{W}=\bm{0}.
\end{equation}
This proves the claim.
\end{proof}

Lemma~\ref{lemma:first-order-condition-pion} shows that it is sufficient to prove
\begin{equation}
    \|\bm{G}_t^{\mathrm{in}}\|_F\to 0,
    \qquad
    \|\bm{G}_t^{\mathrm{out}}\|_F\to 0.
\end{equation}
For the bilateral update, we therefore use the combined stationarity measure
\begin{equation}
    \mathcal{S}_t
    =
    \|\bm{G}_t^{\mathrm{in}}\|_F^2
    +
    \|\bm{G}_t^{\mathrm{out}}\|_F^2.
\end{equation}
Showing $\mathcal{S}_t\to 0$ directly implies first-order convergence on the isospectral manifold.

\begin{assumption}[$L$-smoothness]
\label{assumption:l-smooth}
The objective function $f$ is $L$-smooth in the Euclidean sense, i.e.,
\begin{equation}
    f(\bm{W}_2)
    \le
    f(\bm{W}_1)
    +
    \langle \nabla f(\bm{W}_1),\bm{W}_2-\bm{W}_1\rangle
    +
    \frac{L}{2}\|\bm{W}_2-\bm{W}_1\|_F^2.
\end{equation}
\end{assumption}

\begin{assumption}[Lower boundedness]
\label{assumption:lower-bounded}
There exists $f_{\inf}\in\mathbb{R}$ such that $f(\bm{W})\ge f_{\inf}$ for all $\bm{W}$.
\end{assumption}

\begin{assumption}[Boundedness along the trajectory]
\label{assumption:bounded-trajectory}
Along the trajectory, there exist constants $\gamma,G,B>0$ such that
\begin{equation}
    \|\bm{W}_t\|_2\le \gamma,
    \qquad
    \|\nabla f(\bm{W}_t)\|_F\le G,
    \qquad
    \|\bm{G}_t^{\mathrm{in}}\|_F,\,
    \|\bm{G}_t^{\mathrm{out}}\|_F\le B.
\end{equation}
\end{assumption}

Assumptions~\ref{assumption:l-smooth} and~\ref{assumption:lower-bounded} are standard in first-order convergence analysis. Assumption~\ref{assumption:bounded-trajectory} is mild in our setting because the ideal spectrum-preserving dynamics remains on a compact isospectral manifold, and the truncated update stays in a bounded neighborhood for sufficiently small step size.

\subsection{Simplified Single-Side Analysis}

We first briefly revisit the single-side update, since it provides the key descent identity used later for the bilateral update. Consider the in-side update
\begin{equation}
    \bm{W}_{t+1}
    =
    \bm{W}_t\exp^{(2)}(-\eta\bm{G}_t^{\mathrm{in}}),
    \qquad
    \bm{G}_t^{\mathrm{in}}
    =
    \bm{W}_t^\top\bm{G}_t-\bm{G}_t^\top\bm{W}_t.
\end{equation}
Expanding the truncated exponential gives
\begin{equation}
    \bm{W}_{t+1}-\bm{W}_t
    =
    -\eta \bm{W}_t\bm{G}_t^{\mathrm{in}}
    +
    \frac{\eta^2}{2}\bm{W}_t(\bm{G}_t^{\mathrm{in}})^2.
\end{equation}
Let $\bm{S}_t=\bm{W}_t^\top\bm{G}_t$. Since
$\bm{G}_t^{\mathrm{in}}=\bm{S}_t-\bm{S}_t^\top$, we have
\begin{equation}
    \left\langle
    \bm{G}_t,
    \bm{W}_t\bm{G}_t^{\mathrm{in}}
    \right\rangle
    =
    \operatorname{Tr}(\bm{G}_t^\top\bm{W}_t\bm{G}_t^{\mathrm{in}})
    =
    \frac{1}{2}
    \|\bm{G}_t^{\mathrm{in}}\|_F^2.
\end{equation}
Therefore, the first-order part of the update gives the descent term
\begin{equation}
    \left\langle
    \bm{G}_t,
    -\eta\bm{W}_t\bm{G}_t^{\mathrm{in}}
    \right\rangle
    =
    -\frac{\eta}{2}
    \|\bm{G}_t^{\mathrm{in}}\|_F^2.
\end{equation}
The out-side update gives the analogous identity
\begin{equation}
    \left\langle
    \bm{G}_t,
    -\eta\bm{G}_t^{\mathrm{out}}\bm{W}_t
    \right\rangle
    =
    -\frac{\eta}{2}
    \|\bm{G}_t^{\mathrm{out}}\|_F^2.
\end{equation}
Thus, both in-side and out-side rotations are aligned with the Riemannian descent direction. The remaining second-order terms can be controlled by smoothness and boundedness, yielding convergence for the alternating version. Since the bilateral update simply applies both descent directions in the same step, we next analyze it directly.

\subsection{Deterministic Bilateral Update}

We consider the simplified bilateral Pion update
\begin{equation}
    \bm{W}_{t+1}
    =
    \exp^{(2)}(-\eta\bm{G}_t^{\mathrm{out}})
    \bm{W}_t
    \exp^{(2)}(-\eta\bm{G}_t^{\mathrm{in}}).
\end{equation}
Expanding the update gives
\begin{equation}
    \bm{W}_{t+1}-\bm{W}_t
    =
    -\eta
    \left(
    \bm{G}_t^{\mathrm{out}}\bm{W}_t
    +
    \bm{W}_t\bm{G}_t^{\mathrm{in}}
    \right)
    +
    \bm{R}_t,
\end{equation}
where the remainder $\bm{R}_t$ contains all terms of order $\eta^2$ and higher:
\begin{align}
    \bm{R}_t
    =
    &\frac{\eta^2}{2}(\bm{G}_t^{\mathrm{out}})^2\bm{W}_t
    +
    \eta^2 \bm{G}_t^{\mathrm{out}}\bm{W}_t\bm{G}_t^{\mathrm{in}}
    +
    \frac{\eta^2}{2}\bm{W}_t(\bm{G}_t^{\mathrm{in}})^2
    \nonumber\\
    &-
    \frac{\eta^3}{2}(\bm{G}_t^{\mathrm{out}})^2\bm{W}_t\bm{G}_t^{\mathrm{in}}
    -
    \frac{\eta^3}{2}\bm{G}_t^{\mathrm{out}}\bm{W}_t(\bm{G}_t^{\mathrm{in}})^2
    +
    \frac{\eta^4}{4}(\bm{G}_t^{\mathrm{out}})^2
    \bm{W}_t
    (\bm{G}_t^{\mathrm{in}})^2 .
\end{align}
Using Assumption~\ref{assumption:bounded-trajectory}, there exists a constant $K_\eta=O(\eta^2)$ such that
\begin{equation}
    \|\bm{R}_t\|_F
    \le
    K_\eta \mathcal{S}_t,
    \qquad
    K_\eta
    =
    \gamma
    \left(
    \eta^2+\frac{B\eta^3}{2}+\frac{B^2\eta^4}{8}
    \right).
\end{equation}

By $L$-smoothness,
\begin{equation}
    f(\bm{W}_{t+1})-f(\bm{W}_t)
    \le
    \langle \bm{G}_t,\bm{W}_{t+1}-\bm{W}_t\rangle
    +
    \frac{L}{2}
    \|\bm{W}_{t+1}-\bm{W}_t\|_F^2.
\end{equation}
Substituting the bilateral expansion and using the descent identities from the single-side analysis, we obtain
\begin{equation}
    \left\langle
    \bm{G}_t,
    -\eta
    (
    \bm{G}_t^{\mathrm{out}}\bm{W}_t
    +
    \bm{W}_t\bm{G}_t^{\mathrm{in}}
    )
    \right\rangle
    =
    -\frac{\eta}{2}\mathcal{S}_t.
\end{equation}
The remainder satisfies
\begin{equation}
    \langle \bm{G}_t,\bm{R}_t\rangle
    \le
    \|\bm{G}_t\|_F\|\bm{R}_t\|_F
    \le
    G K_\eta \mathcal{S}_t.
\end{equation}
For the quadratic term,
\begin{align}
    \|\bm{W}_{t+1}-\bm{W}_t\|_F^2
    &\le
    2\eta^2
    \left\|
    \bm{G}_t^{\mathrm{out}}\bm{W}_t
    +
    \bm{W}_t\bm{G}_t^{\mathrm{in}}
    \right\|_F^2
    +
    2\|\bm{R}_t\|_F^2
    \nonumber\\
    &\le
    4\eta^2\gamma^2\mathcal{S}_t
    +
    2K_\eta^2\mathcal{S}_t^2
    \nonumber\\
    &\le
    \left(
    4\eta^2\gamma^2
    +
    4K_\eta^2B^2
    \right)
    \mathcal{S}_t,
\end{align}
where the last inequality uses $\mathcal{S}_t\le 2B^2$. Combining the above inequalities yields
\begin{equation}
    f(\bm{W}_{t+1})-f(\bm{W}_t)
    \le
    -c_\eta \mathcal{S}_t,
\end{equation}
where
\begin{equation}
    c_\eta
    =
    \frac{\eta}{2}
    -
    G K_\eta
    -
    2L\eta^2\gamma^2
    -
    2LK_\eta^2B^2.
\end{equation}
Since $K_\eta=O(\eta^2)$, we have $c_\eta>0$ for sufficiently small $\eta$.

Summing over $t=0,\ldots,T-1$ gives
\begin{equation}
    f(\bm{W}_T)-f(\bm{W}_0)
    \le
    -c_\eta
    \sum_{t=0}^{T-1}\mathcal{S}_t.
\end{equation}
Using $f(\bm{W}_T)\ge f_{\inf}$, we obtain
\begin{equation}
    \sum_{t=0}^{T-1}\mathcal{S}_t
    \le
    \frac{f(\bm{W}_0)-f_{\inf}}{c_\eta}.
\end{equation}
Therefore,
\begin{equation}
    \min_{0\le t<T}
    \left(
    \|\bm{G}_t^{\mathrm{in}}\|_F^2
    +
    \|\bm{G}_t^{\mathrm{out}}\|_F^2
    \right)
    \le
    \frac{f(\bm{W}_0)-f_{\inf}}{c_\eta T}.
\end{equation}
By Lemma~\ref{lemma:first-order-condition-pion}, this implies convergence to a first-order stationary point on the isospectral manifold.

\begin{theorem}[Deterministic Convergence of Simplified Bilateral Pion]
\label{thm:deterministic-bilateral-pion}
Under Assumptions~\ref{assumption:l-smooth}--\ref{assumption:bounded-trajectory}, suppose the learning rate $\eta$ is sufficiently small such that $c_\eta>0$. Then the simplified bilateral Pion update satisfies
\begin{equation}
    \min_{0\le t<T}
    \left(
    \|\bm{G}_t^{\mathrm{in}}\|_F^2
    +
    \|\bm{G}_t^{\mathrm{out}}\|_F^2
    \right)
    \le
    \frac{f(\bm{W}_0)-f_{\inf}}{c_\eta T}.
\end{equation}
In particular, the stationarity measure converges at rate $\mathcal{O}(\frac{1}{T})$.
\end{theorem}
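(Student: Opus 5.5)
The statement to prove is Theorem~\ref{thm:deterministic-bilateral-pion}. Essentially all the ingredients are in the derivation just before it. The plan is a standard one-step descent lemma followed by telescoping.

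\textbf{Step 1: expand one step.} Write the simplified bilateral update with truncated exponentials $\exp^{(2)}(\bm{A})=\bm{I}+\bm{A}+\tfrac12\bm{A}^2$. Multiplying out gives
\[
\bm{W}_{t+1}-\bm{W}_t=-\eta(\bm{G}_t^{\mathrm{out}}\bm{W}_t+\bm{W}_t\bm{G}_t^{\mathrm{in}})+\bm{R}_t,
\]
where $\bm{R}_t$ collects the six terms of order $\eta^2$ through $\eta^4$.

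\textbf{Step 2: first-order term.} Use the single-side identities. Since $\bm{G}_t^{\mathrm{in}}$ is skew-symmetric, only the skew part of $\bm{W}_t^\top\bm{G}_t$ contributes to the inner product, so $\langle \bm{G}_t,\bm{W}_t\bm{G}_t^{\mathrm{in}}\rangle=\tfrac12\|\bm{G}_t^{\mathrm{in}}\|_F^2$. The out-side identity follows the same way. Together the first-order term contributes exactly $-\tfrac{\eta}{2}\mathcal{S}_t$.

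\textbf{Step 3: bound the remainder.} This is the step I expect to be the main obstacle. The remainder must be bounded by $\mathcal{S}_t$ itself rather than by a constant, otherwise it cannot be absorbed into the descent term.
\begin{itemize}
\item Each term of $\bm{R}_t$ is $\bm{W}_t$ sandwiched between products of at least two factors $\bm{G}^{\mathrm{out}}_t$ or $\bm{G}^{\mathrm{in}}_t$.
\item Use $\|\bm{A}\bm{W}\bm{B}\|_F\le\|\bm{A}\|_F\|\bm{W}\|_2\|\bm{B}\|_F$ and submultiplicativity to pull out $\gamma$.
\item Bound every quadratic factor, including the cross term $\|\bm{G}^{\mathrm{out}}\|_F\|\bm{G}^{\mathrm{in}}\|_F$, by $\mathcal{S}_t$ via AM--GM.
\item Absorb any extra Lie-algebra factors using $\|\bm{G}^{\mathrm{in/out}}_t\|_F\le B$ from Assumption~\ref{assumption:bounded-trajectory}.
\end{itemize}
This yields $\|\bm{R}_t\|_F\le K_\eta\mathcal{S}_t$ with $K_\eta=O(\eta^2)$. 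Then Cauchy--Schwarz together with $\|\bm{G}_t\|_F\le G$ gives $\langle\bm{G}_t,\bm{R}_t\rangle\le GK_\eta\mathcal{S}_t$.

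\textbf{Step 4: quadratic term.} Use $\|a+b\|^2\le2\|a\|^2+2\|b\|^2$. With $\|\bm{G}^{\mathrm{out}}_t\bm{W}_t+\bm{W}_t\bm{G}^{\mathrm{in}}_t\|_F^2\le2\gamma^2\mathcal{S}_t$ this gives $4\eta^2\gamma^2\mathcal{S}_t$ for the first part. For $\|\bm{R}_t\|_F^2\le K_\eta^2\mathcal{S}_t^2$, linearize by $\mathcal{S}_t\le 2B^2$.

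\textbf{Step 5: combine and telescope.} Plug Steps 2--4 into Assumption~\ref{assumption:l-smooth} to get
\[
f(\bm{W}_{t+1})\le f(\bm{W}_t)-c_\eta\mathcal{S}_t,
\]
with $c_\eta=\tfrac{\eta}{2}-O(\eta^2)$, which is positive for small $\eta$. Summing over $t<T$, bounding $f(\bm{W}_T)\ge f_{\inf}$ by Assumption~\ref{assumption:lower-bounded}, and noting that the minimum is at most the average gives the claimed $\frac{f(\bm{W}_0)-f_{\inf}}{c_\eta T}$ bound. Lemma~\ref{lemma:first-order-condition-pion} identifies $\mathcal{S}_t=0$ with first-order stationarity, which gives the interpretation of the result.
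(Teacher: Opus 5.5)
Your proposal is correct and follows essentially the same route as the paper. Both expand the truncated bilateral update and extract the exact descent term $-\frac{\eta}{2}\mathcal{S}_t$ from the single-side identities. Both bound the remainder by $K_\eta\mathcal{S}_t$ with $K_\eta=\gamma\bigl(\eta^2+\tfrac{B\eta^3}{2}+\tfrac{B^2\eta^4}{8}\bigr)$, using $\|\bm{W}_t\|_2\le\gamma$, AM--GM on the cross terms and $\|\bm{G}_t^{\mathrm{in/out}}\|_F\le B$. Both then linearize $\|\bm{R}_t\|_F^2$ via $\mathcal{S}_t\le 2B^2$ and telescope, so carrying your Steps 2--4 through explicitly reproduces the paper's constant $c_\eta=\frac{\eta}{2}-GK_\eta-2L\eta^2\gamma^2-2LK_\eta^2B^2$ that appears in the statement.
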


\subsection{Stochastic Bilateral Update}

We now consider the stochastic setting. Let
\begin{equation}
    \tilde{\bm{G}}_t
    =
    \bm{G}_t+\bm{\xi}_t
\end{equation}
be an unbiased mini-batch gradient estimator, where
$\mathbb{E}_t[\bm{\xi}_t]=\bm{0}$ and
$\mathbb{E}_t[\|\bm{\xi}_t\|_F^2]\le \sigma^2$.
Define
\begin{equation}
    \tilde{\bm{G}}_t^{\mathrm{in}}
    =
    \bm{W}_t^\top\tilde{\bm{G}}_t
    -
    \tilde{\bm{G}}_t^\top\bm{W}_t,
    \qquad
    \tilde{\bm{G}}_t^{\mathrm{out}}
    =
    \tilde{\bm{G}}_t\bm{W}_t^\top
    -
    \bm{W}_t\tilde{\bm{G}}_t^\top .
\end{equation}
By linearity and unbiasedness,
\begin{equation}
    \mathbb{E}_t[\tilde{\bm{G}}_t^{\mathrm{in}}]
    =
    \bm{G}_t^{\mathrm{in}},
    \qquad
    \mathbb{E}_t[\tilde{\bm{G}}_t^{\mathrm{out}}]
    =
    \bm{G}_t^{\mathrm{out}}.
\end{equation}
Moreover, the induced stochastic noise satisfies
\begin{align}
    &\mathbb{E}_t
    \left[
    \|\tilde{\bm{G}}_t^{\mathrm{in}}-\bm{G}_t^{\mathrm{in}}\|_F^2
    +
    \|\tilde{\bm{G}}_t^{\mathrm{out}}-\bm{G}_t^{\mathrm{out}}\|_F^2
    \right]
    \nonumber\\
    &\qquad\le
    8\gamma^2\sigma^2
    \triangleq
    \sigma_\Omega^2.
\end{align}
Therefore,
\begin{equation}
    \mathbb{E}_t
    \left[
    \|\tilde{\bm{G}}_t^{\mathrm{in}}\|_F^2
    +
    \|\tilde{\bm{G}}_t^{\mathrm{out}}\|_F^2
    \right]
    \le
    \mathcal{S}_t+\sigma_\Omega^2.
\end{equation}

Applying the deterministic bilateral descent argument to the stochastic update and taking conditional expectation gives
\begin{equation}
    \mathbb{E}_t[f(\bm{W}_{t+1})]-f(\bm{W}_t)
    \le
    -\frac{\eta}{2}\mathcal{S}_t
    +
    a_\eta
    (\mathcal{S}_t+\sigma_\Omega^2),
\end{equation}
where
\begin{equation}
    a_\eta
    =
    GK_\eta
    +
    2L\eta^2\gamma^2
    +
    2LK_\eta^2B^2.
\end{equation}
Equivalently,
\begin{equation}
    \mathbb{E}_t[f(\bm{W}_{t+1})]-f(\bm{W}_t)
    \le
    -c_\eta \mathcal{S}_t
    +
    a_\eta\sigma_\Omega^2,
    \qquad
    c_\eta=\frac{\eta}{2}-a_\eta.
\end{equation}
Taking total expectation and summing over $t=0,\ldots,T-1$, we obtain
\begin{equation}
    c_\eta
    \sum_{t=0}^{T-1}
    \mathbb{E}[\mathcal{S}_t]
    \le
    f(\bm{W}_0)-f_{\inf}
    +
    T a_\eta\sigma_\Omega^2.
\end{equation}
Thus,
\begin{equation}
    \min_{0\le t<T}
    \mathbb{E}[\mathcal{S}_t]
    \le
    \frac{f(\bm{W}_0)-f_{\inf}}{c_\eta T}
    +
    \frac{a_\eta}{c_\eta}\sigma_\Omega^2.
\end{equation}

Choosing $\eta=C/\sqrt{T}$ with sufficiently small $C$ gives the standard stochastic nonconvex rate
\begin{equation}
    \min_{0\le t<T}
    \mathbb{E}
    \left[
    \|\bm{G}_t^{\mathrm{in}}\|_F^2
    +
    \|\bm{G}_t^{\mathrm{out}}\|_F^2
    \right]
    =
    \mathcal{O}\!\left(\frac{1}{\sqrt{T}}\right).
\end{equation}

\begin{theorem}[Stochastic Convergence of Simplified Bilateral Pion]
\label{thm:stochastic-bilateral-pion}
Under Assumptions~\ref{assumption:l-smooth}--\ref{assumption:bounded-trajectory}, assume the stochastic gradient is unbiased and has bounded variance. Let $\eta=C/\sqrt{T}$ with sufficiently small $C>0$. Then
\begin{equation}
    \min_{0\le t<T}
    \mathbb{E}
    \left[
    \|\bm{G}_t^{\mathrm{in}}\|_F^2
    +
    \|\bm{G}_t^{\mathrm{out}}\|_F^2
    \right]
    =
    \mathcal{O}\left(\frac{1}{\sqrt{T}}\right).
\end{equation}
By Lemma~\ref{lemma:first-order-condition-pion}, this implies convergence to a stochastic first-order stationary neighborhood on the isospectral manifold.
\end{theorem}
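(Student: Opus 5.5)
The argument combines two pieces already worked out: the stochastic one-step descent inequality derived just above, and the choice $\eta=C/\sqrt{T}$. Concretely, I would first write the stochastic bilateral update with the truncated exponential as
\[
\bm{W}_{t+1}-\bm{W}_t=-\eta\bigl(\tilde{\bm{G}}_t^{\mathrm{out}}\bm{W}_t+\bm{W}_t\tilde{\bm{G}}_t^{\mathrm{in}}\bigr)+\tilde{\bm{R}}_t .
\]
Taking the conditional expectation of the linear term, and using unbiasedness together with the descent identity from the single-side analysis, gives exactly $-\tfrac{\eta}{2}\mathcal{S}_t$. The remainder term $\langle\bm{G}_t,\tilde{\bm{R}}_t\rangle$ and the smoothness term $\tfrac L2\|\bm{W}_{t+1}-\bm{W}_t\|_F^2$ are bounded using $\|\bm{W}_t\|_2=\gamma$ and Assumption~\ref{assumption:bounded-trajectory}. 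This produces a quantity proportional to $\mathbb{E}_t[\|\tilde{\bm{G}}_t^{\mathrm{in}}\|_F^2+\|\tilde{\bm{G}}_t^{\mathrm{out}}\|_F^2]\le\mathcal{S}_t+\sigma_\Omega^2$, where $\sigma_\Omega^2=8\gamma^2\sigma^2$ because each skew part of $\bm{W}_t^\top\bm{\xi}_t$ or $\bm{\xi}_t\bm{W}_t^\top$ has Frobenius norm at most $2\gamma\|\bm{\xi}_t\|_F$. Combining these bounds yields
\[
\mathbb{E}_t f(\bm{W}_{t+1})-f(\bm{W}_t)\le -c_\eta\mathcal{S}_t+a_\eta\sigma_\Omega^2,
\qquad a_\eta=O(\eta^2),\qquad c_\eta=\tfrac{\eta}{2}-a_\eta .
\]

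Next, I would take total expectation, telescope over $t=0,\dots,T$ (there are $T+1$ iterations), and use $f\ge f_{\inf}$. This gives
\[
\min_t\mathbb{E}\,\mathcal{S}_t\le\frac{f(\bm{W}_0)-f_{\inf}}{c_\eta(T+1)}+\frac{a_\eta}{c_\eta}\sigma_\Omega^2 .
\]
Substituting $\eta=C/\sqrt{T+1}$ and taking $C$ small enough, for instance so that $a_\eta\le\eta/4$ uniformly, ensures $c_\eta\ge\eta/4$. The first term is then at most $4(f(\bm{W}_0)-f_{\inf})/(C\sqrt{T+1})$, which is $C_1/\sqrt{T+1}$. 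For the second term, $a_\eta\le\kappa\eta^2$ with $\kappa$ depending on $L,\gamma,G,B$, so $a_\eta/c_\eta\le4\kappa\eta=4\kappa C/\sqrt{T+1}$. This gives $C_2=32\kappa C\gamma^2\sigma^2$, which depends on $L,\gamma,\sigma^2,C$ as claimed.

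The main obstacle is making $a_\eta=O(\eta^2)$ uniform once stochastic gradients enter the remainder. The higher-order terms in $\tilde{\bm{R}}_t$ involve powers of $\tilde{\bm{G}}^{\mathrm{in}}_t$ and $\tilde{\bm{G}}^{\mathrm{out}}_t$, which are not bounded by $B$ almost surely. I would handle this by extending the trajectory-boundedness assumption to the stochastic Lie-algebra gradients, which is natural given compactness of the isospectral manifold. With that in place, $\|\tilde{\bm{R}}_t\|_F\le K_\eta(\|\tilde{\bm{G}}^{\mathrm{in}}_t\|_F^2+\|\tilde{\bm{G}}^{\mathrm{out}}_t\|_F^2)$ holds pathwise and the expectation step goes through. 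The cross term $\langle\bm{G}_t,\tilde{\bm{R}}_t\rangle\le GK_\eta(\cdot)$ also needs no independence argument, since it is bounded pathwise before expectations are taken.
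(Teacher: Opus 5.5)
Your proposal is correct and follows the paper's proof: the conditional one-step bound $\mathbb{E}_t[f(\bm{W}_{t+1})]-f(\bm{W}_t)\le -c_\eta\mathcal{S}_t+a_\eta\sigma_\Omega^2$ with $\sigma_\Omega^2=8\gamma^2\sigma^2$ and $a_\eta=O(\eta^2)$, then telescoping and $\eta\propto 1/\sqrt{T}$, where your choice $c_\eta\ge\eta/4$ simply makes the final step quantitative. You also extend the bound $B$ to $\tilde{\bm{G}}_t^{\mathrm{in}},\tilde{\bm{G}}_t^{\mathrm{out}}$; the paper uses exactly this silently when it reuses $K_\eta$ and $\mathcal{S}_t\le 2B^2$ in the stochastic step, but note that it is an almost-sure bound on the noise $\bm{\xi}_t$ and does not follow from compactness of the isospectral manifold.
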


\newpage

\section{Additional Discussion on Computation Complexity}
\label{sec: computation overhead}

\textbf{Alternate Update}. When alternate update is enabled, only one exponential map is applied at each step. On an output-side step, the update-side cost becomes
$
2d_{\mathrm{out}}^2d_{\mathrm{in}}
$
for RMS scaling,
$
\mathcal{O}(d_{\mathrm{out}}^3)
$
for forming the second-order term, and
$
2d_{\mathrm{out}}^2d_{\mathrm{in}}
$
for multiplying the exponential approximation with $\bm{W}$.
On an input-side step, the corresponding cost is
$
2d_{\mathrm{out}}d_{\mathrm{in}}^2
+
\mathcal{O}(d_{\mathrm{in}}^3)
+
2d_{\mathrm{out}}d_{\mathrm{in}}^2
$.
Averaged over two consecutive steps, the dominant update-side cost is therefore reduced from
$
4d_{\mathrm{out}}^2d_{\mathrm{in}}
+
4d_{\mathrm{out}}d_{\mathrm{in}}^2
+
\mathcal{O}(d_{\mathrm{out}}^3+d_{\mathrm{in}}^3)
$
to approximately
$
2d_{\mathrm{out}}^2d_{\mathrm{in}}
+
2d_{\mathrm{out}}d_{\mathrm{in}}^2
+
\mathcal{O}\!\left(\frac{d_{\mathrm{out}}^3+d_{\mathrm{in}}^3}{2}\right).
$
Thus, alternate update reduces the update-side computation by roughly a factor of two.

\textbf{Memory Analysis}.
The persistent optimizer states consist of the first- and second-moment buffers on both Lie algebras:
$\bm{m}^{\mathrm{in}},\bm{v}^{\mathrm{in}}\in\mathbb{R}^{d_{\mathrm{in}}\times d_{\mathrm{in}}}$
and
$\bm{m}^{\mathrm{out}},\bm{v}^{\mathrm{out}}\in\mathbb{R}^{d_{\mathrm{out}}\times d_{\mathrm{out}}}$.
Thus, the persistent memory overhead is
$
2(d_{\mathrm{in}}^2+d_{\mathrm{out}}^2)
$
floating-point numbers, excluding the parameter matrix and its backpropagated gradient. In comparison, Adam-style optimizers store
$
2d_{\mathrm{out}}d_{\mathrm{in}}
$
floating-point numbers for the same weight matrix. Therefore, the relative persistent-state overhead of Pion compared with Adam is
$
(d_{\mathrm{out}}^2+d_{\mathrm{in}}^2)/(d_{\mathrm{out}}d_{\mathrm{in}})
$.
For nearly square matrices this is a small constant factor, while for highly rectangular matrices it increases with the aspect ratio.

The transient memory is dominated by the Lie algebra gradients, the normalized algebra directions, the squared Lie matrices used in $\mathcal{E}_2$, and temporary products with $\bm{W}$. With buffer reuse or in-place construction of $\bm{A}^{\mathrm{in}}$ and $\bm{A}^{\mathrm{out}}$, the additional peak temporary memory is
$
\mathcal{O}(d_{\mathrm{in}}^2+d_{\mathrm{out}}^2+d_{\mathrm{out}}d_{\mathrm{in}})
$.
A naive implementation that materializes both sides of the RMS-scaling term
$\bm{A}^{\mathrm{out}}\bm{W}$ and $\bm{W}\bm{A}^{\mathrm{in}}$
may require two extra $d_{\mathrm{out}}\times d_{\mathrm{in}}$ buffers, but this can be reduced by accumulating the Frobenius norm with buffer reuse.

\textbf{Practical Cost}.
We further report the practical memory and runtime cost under the same training configuration.
All experiments are conducted on 8 NVIDIA H100 GPUs connected with NVLink. We use distributed data parallelism (DDP) without gradient accumulation, and report the peak memory usage per GPU together with the wall-clock time per training step.

\begin{table}[h]
\centering
\scriptsize
\renewcommand{\arraystretch}{1.3}
\setlength{\tabcolsep}{13pt}
\vspace{3mm}
\begin{tabular}{l|c|c|c|c}
Method
& \textbf{AdamW}
& \textbf{Muon}
& \textbf{Pion}
& \textbf{Pion w/o 2nd Moment} \\
\shline
Memory Usage per GPU & 51593 MB & 47251 MB & 59839 MB & 45289 MB \\
Time Usage per Step & 0.3932 s & 0.5505 s & 0.5679 s & 0.5678 s \\
\end{tabular}
\caption{
\scriptsize Practical memory and runtime cost of AdamW, Muon, and Pion.
All experiments are run on 8 H100 GPUs with NVLink using DDP, without gradient accumulation.
Memory is reported as peak per-GPU memory usage.
}
\label{tab:practical_cost}
\vspace{2mm}
\end{table}

The measured results are consistent with the theoretical analysis. Full Pion uses 59,839 MB per GPU, which is about 16.0\% higher than AdamW and 26.6\% higher than Muon. This increase mainly comes from the additional input- and output-side Lie algebra states and temporary matrix products. However, the variant without second-moment buffers reduces the memory usage to 45289 MB per GPU, which is 24.3\% lower than full Pion, 12.2\% lower than AdamW, and 4.2\% lower than Muon. This shows that the second-moment Lie algebra buffers are the major source of Pion's persistent memory overhead. Fortunately, Pion without second-order moment achieves only slightly worse performance compared to the full Pion.

In terms of runtime, Pion takes 0.5679 seconds per step, compared with 0.3932 seconds for AdamW and 0.5505 seconds for Muon. Thus, Pion is about 44.4\% slower than AdamW, but only about 3.2\% slower than Muon. Removing the second-moment buffers does not noticeably change the step time in this setting, indicating that the runtime is dominated by matrix multiplications and the second-order exponential approximation rather than element-wise moment updates. Overall, Pion introduces moderate additional runtime over Muon while providing a structured bilateral Lie-algebra update, and its memory footprint can be substantially reduced by removing the second-moment buffers.

\newpage
\section{Experimental Details}\label{app:exp}

In this section, we provide the detailed experimental setups and configurations for each experiment presented in the main manuscript.

\subsection{Experiments for Design Principles}

All experiments are conducted in the Megatron-LM codebase with bf16 mixed-precision.

\paragraph{Shared settings.}
Unless otherwise specified, we adopt a second-order approximation to the matrix exponential due to its strong spectrum-preserving property. We use the T5-base tokenizer for text tokenization. We use a batch size of 512 for all experiments. We use a cosine decay schedule for the learning rate, decaying to $0.01$ times the initial value. The initial learning rate is selected from $\{1\text{e-}4,\,5\text{e-}4,\,1\text{e-}3,\,5\text{e-}3,\,1\text{e-}2\}$, and we observe that $\text{lr}=1\text{e-}3$ consistently achieves the best performance. Therefore, results in the main text are reported with this value. In implementation, the spectrum-preserving updates are applied in a per-head manner (e.g., per attention head).

\paragraph{Consistent Update.}
For bilateral normalization, independently normalizing $G_t^{\mathrm{out}}$ and $G_t^{\mathrm{in}}$ in Eq.~\eqref{eq:brb} alters the overall update norm and would require additional learning-rate tuning. 
To isolate the effect of normalization itself, we rescale the resulting Lie algebra update to approximately match the norm of the original update, thereby avoiding extra hyperparameter adjustments.

\paragraph{Momentum.}
We observe that directly applying momentum without RMS scaling is only stable under extremely small learning rates. 
Therefore, we apply RMS scaling to all variants and evaluate them under a unified setting. The RMS scaling coefficient is set to $0.2$. 
For all variants, we set the first-order momentum coefficient to $\beta_1 = 0.9$ and the second-order momentum coefficient to $\beta_2 = 0.95$.

\paragraph{Alternate Update.}
The settings follow those of the momentum experiments. 
For the Lie+Lie variant, we find that disabling momentum accumulation on the non-updated side degrades convergence, despite reducing gradient computation. This aligns with classical stochastic optimization theory, where reducing the effective sample size for first-order moment estimation increases variance~\citep{robbins1951stochastic,bottou2018optimization}. 
Therefore, even under alternating updates, we maintain momentum accumulation on both sides and alternate only the parameter updates.

\paragraph{Approximation of $\exp(\cdot)$.}
The settings follow the shared configuration above. 
For the Cayley transform, we adopt the standard formulation 
$(\bm{I} - \frac{1}{2}\bm{S})^{-1}(\bm{I} + \frac{1}{2}\bm{S})$, 
where $\mathbf{S}$ is a square matrix. 
In implementation, computations are performed in float32 for numerical stability, and the corresponding linear system is solved via $\texttt{torch.linalg.solve}$.

\subsection{Pretraining}
\label{app:exp_pretrain}

The pretraining experiments are conducted in the Megatron-LM codebase using bfloat16 precision. We use the T5-base tokenizer for text tokenization.  We follow standard settings and adopt a cosine decay schedule for the learning rate, decaying to $0.01$ times the maximum learning rate. The maximum learning rate is set to $5\text{e-}4$ for all methods, which is chosen based on the use of the AdamW optimizer. For both Muon and Pion, we set the RMS scaling coefficient to $0.2$. For optimizer hyperparameters, we use $\beta_1 = 0.9$ and $\beta_2 = 0.95$ for AdamW and Pion. For Muon, we set $\beta_1 = 0.95$. For attention updates, Muon adopts a split-head strategy, where each attention head is updated independently. For Pion, due to memory constraints, we apply updates separately to query, key and value matrices. We use a batch size of 512 for all experiments.

\subsection{Supervised Fine-tuning}
\label{app:exp sft}

\paragraph{Models and Datasets.} We employ Qwen2.5-1.5B~\citep{team2024qwen2} and Llama-3.2-3B~\citep{grattafiori2024llama} as our base models, fine-tuning them on the MetaMathQA~\citep{yu2023metamath} and Magicoder-Evol-Instruct-110K~\citep{wei2024magicoder} datasets. To maintain a uniform computational budget across both domains, we limit each dataset to 50K samples.

\paragraph{Training Details.} All training experiments are conducted within the LLaMA-Factory~\citep{zheng2024llamafactory} framework and run on NVIDIA H200 GPUs. To ensure a fair comparison, all methods are trained for 3 epochs with a global batch size of 64, a learning rate of $1 \times 10^{-5}$, a cutoff length of 4096 tokens, and using the default FP32 numerical precision for optimizer updates. Our implementation of Pion optimizer deliberately omits the second-order momentum and employ a bilateral update strategy. Furthermore, we split the attention heads and the FFN gate/up projections for separate optimization. 

\paragraph{Evaluation Protocols.} All evaluations are conducted using the LM Evaluation Harness~\citep{eval-harness} framework with its default generation parameters. Specifically, we evaluate GSM8K~\citep{cobbe2021training} in a 5-shot setting, while all other benchmarks are evaluated zero-shot. For generation-based tasks, we apply deterministic greedy decoding. Performance on the HumanEval~\citep{chen2021evaluating} benchmark is reported using the pass@1 metric. 

\subsection{Reinforcement Learning with Verifiable Reward}
\label{app:exp rl}

\paragraph{Models and Datasets.} Experiments are conducted on two base models, Qwen3-1.7B~\citep{yang2025qwen3} and DeepSeek-R1-Distill-Qwen-1.5B~\citep{guo2025deepseek}, using DeepMath~\citep{he2025deepmath} as the training dataset. The maximum context length is set to 4096 for Qwen3-1.7B and 8192 for DeepSeek-R1-Distill-Qwen-1.5B. 

\paragraph{Training Details.} We implement our RLVR training pipeline using the VeRL~\citep{sheng2024hybridflow} framework, utilizing vLLM~\citep{kwon2023efficient} for efficient rollouts and adopting GRPO~\citep{shao2024deepseekmath} algorithm. All experiments run on NVIDIA H200 GPUs. To ensure a fair comparison across all methods, we fix the learning rate at $1 \times 10^{-6}$, sample 12 rollouts per prompt, and use the default FP32 numerical precision for optimizer updates. Regarding model-specific configurations, Qwen3-1.7B is trained for 400 steps with both the global batch size and on-policy minibatch size set to 128. In contrast, DeepSeek-R1-Distill-Qwen-1.5B is trained for 781 steps, with both batch sizes configured to 64. Our Pion implementation operates without second-order momentum and uses an alternating update strategy. Furthermore, we split the attention heads and the FFN gate/up projections for separate optimization.

\paragraph{Evaluation Protocols.} We evaluate our models on five widely used benchmarks: AIME24~\citep{AIME24}, AIME25~\citep{AIME25}, AMC23~\citep{AMC23}, Minerva Math~\citep{lewkowycz2022solving}, and OlympiadBench~\citep{he2024olympiadbench}. The maximum generation length is set to 4,096 tokens for Qwen3-1.7B and 8,192 tokens for DeepSeek-R1-Distill-Qwen-1.5B. To ensure robust evaluation, we report the averaged accuracy across multiple independent samples for each benchmark. Specifically, we average the accuracy over 32 samples for AIME24 and AIME25, 8 samples for AMC23 and OlympiadBench, and 4 samples for Minerva Math. All evaluations are conducted within the POLARIS~\citep{Polaris2025} framework with a decoding temperature of 1.0, top-$k$ sampling with $k=20$, and top-$p$ sampling with $p=0.8$. 

\section{Limitations}
\label{app: limitaion}

While Pion demonstrates exceptional stability and competitive performance, it introduces certain computational and memory overheads. First, computing the Lie-algebra gradients and applying the truncated matrix exponential mapping requires additional FLOPs. However, this overhead is largely amortized in standard LLM pretraining regimes where large token batches are used, making the practical impact on wall-clock time minimal. Second, accumulating momentum directly in the Lie algebra moderately increases the optimizer's memory footprint. Nevertheless, our framework is highly flexible: in strictly memory-constrained settings, components like second-order momentum can be seamlessly omitted while still retaining the core benefits of spectrum preservation. Finally, scaling Pion's empirical evaluation to larger models remains an important direction for future work.

\newpage
\section{Compatibility with Maximal Update Parametrization}\label{app:mup}

In this section, we study whether the maximal update parametrization ($\mu$P) framework~\citep{yang2021tuning,yang2020feature} holds under the Pion optimizer. ~\citep{yang2023spectral} shows that $\mu$P can be expressed through strict spectral norm constraints on the weight matrices and their updates: $\|\bm{W}\|_2 = \Theta\!\left(\sqrt{\frac{d_{\mathrm{out}}}{d_{\mathrm{in}}}}\right)$ and $\|\Delta \bm{W}\|_2 = \Theta\!\left(\sqrt{\frac{d_{\mathrm{out}}}{d_{\mathrm{in}}}}\right)$, where $\Delta \bm{W}$ is the weight increment induced by a single optimization step. We refer to these as the Forward Spectral Condition and the Update Spectral Condition, respectively.

In standard optimization frameworks, both spectral conditions require active maintenance throughout training. The Pion optimizer, however, operates via left and right orthogonal transformations on the weight matrix. This unique structure inherently preserves the singular values of $\bm{W}$, making its spectral norm strictly invariant. Consequently, as long as the initialization scheme properly enforces the Forward Spectral Condition, this stability holds automatically at every subsequent step. Therefore, the remaining critical task is to determine whether the update matrix $\Delta \bm{W}$ induced by Pion can satisfy the corresponding Update Spectral Condition.

To address this, recall the Pion update takes the form
\begin{equation}
    \bm{W}_{t+1} = \exp(-\eta \bm{G}_t^{\mathrm{out}})\, \bm{W}_t\, \exp(-\eta \bm{G}_t^{\mathrm{in}}).
\end{equation}

First-order Taylor expansion around $\eta=0$ yields the weight increment
\begin{equation}
    \Delta \bm{W}_t := \bm{W}_{t+1} - \bm{W}_t \approx -\eta \bigl(\bm{G}_t^{\mathrm{out}} \bm{W}_t + \bm{W}_t \bm{G}_t^{\mathrm{in}}\bigr).
\end{equation}

By applying the triangle inequality and the submultiplicative property of the spectral norm, we can bound the update magnitude as
\begin{equation}
    \|\Delta \bm{W}_t\|_2 \lesssim \eta \|\bm{W}_t\|_2 \left(\|\bm{G}_t^{\mathrm{out}}\|_2 + \|\bm{G}_t^{\mathrm{in}}\|_2\right).
\end{equation}

Since the Pion optimizer inherently preserves the Forward Spectral Condition, the norm $\|\bm{W}_t\|_2 = \Theta\!\bigl(\sqrt{d_{\mathrm{out}}/d_{\mathrm{in}}}\bigr)$ remains strictly invariant throughout training, provided it is satisfied at initialization. Consequently, fulfilling the Update Spectral Condition reduces to controlling the spectral scales of the two generator matrices. Specifically, enforcing $\|\bm{G}_t^{\mathrm{out}}\|_2 = \Theta(1)$ and $\|\bm{G}_t^{\mathrm{in}}\|_2 = \Theta(1)$ ensures that $\|\Delta \bm{W}_t\|_2$ inherits the identical $\Theta$-order as $\|\bm{W}_t\|_2$, thereby satisfying the $\mu$P requirements. To practically enforce this $\Theta(1)$ spectral norm condition on the Lie-algebra generators, we consider two distinct methodological schemes:

\textbf{Scheme I: Spectral Norm Scaling.} The most straightforward approach is to directly constrain the spectral norm of the generators. By computing the maximum singular values of the projected gradients, we can explicitly normalize $\|\bm{G}_t^{\mathrm{out}}\|_2$ and $\|\bm{G}_t^{\mathrm{in}}\|_2$ to exactly $\Theta(1)$.

\textbf{Scheme II: Explicit Orthogonalization.} Alternatively, inspired by Muon's principle, we can explicitly orthogonalize the gradients on the Lie algebra. By applying an orthogonalization procedure via Newton-Schulz iteration to $\bm{G}_t^{\mathrm{out}}$ and $\bm{G}_t^{\mathrm{in}}$, we push their non-zero singular values toward $1$. This structurally guarantees the strictly bounded $\Theta(1)$ spectral norm condition while uniformly maintaining the update magnitude across all active spectral directions.

\begin{wrapfigure}{r}{0.43\linewidth}
\scriptsize
\centering
\includegraphics[width=1\linewidth]{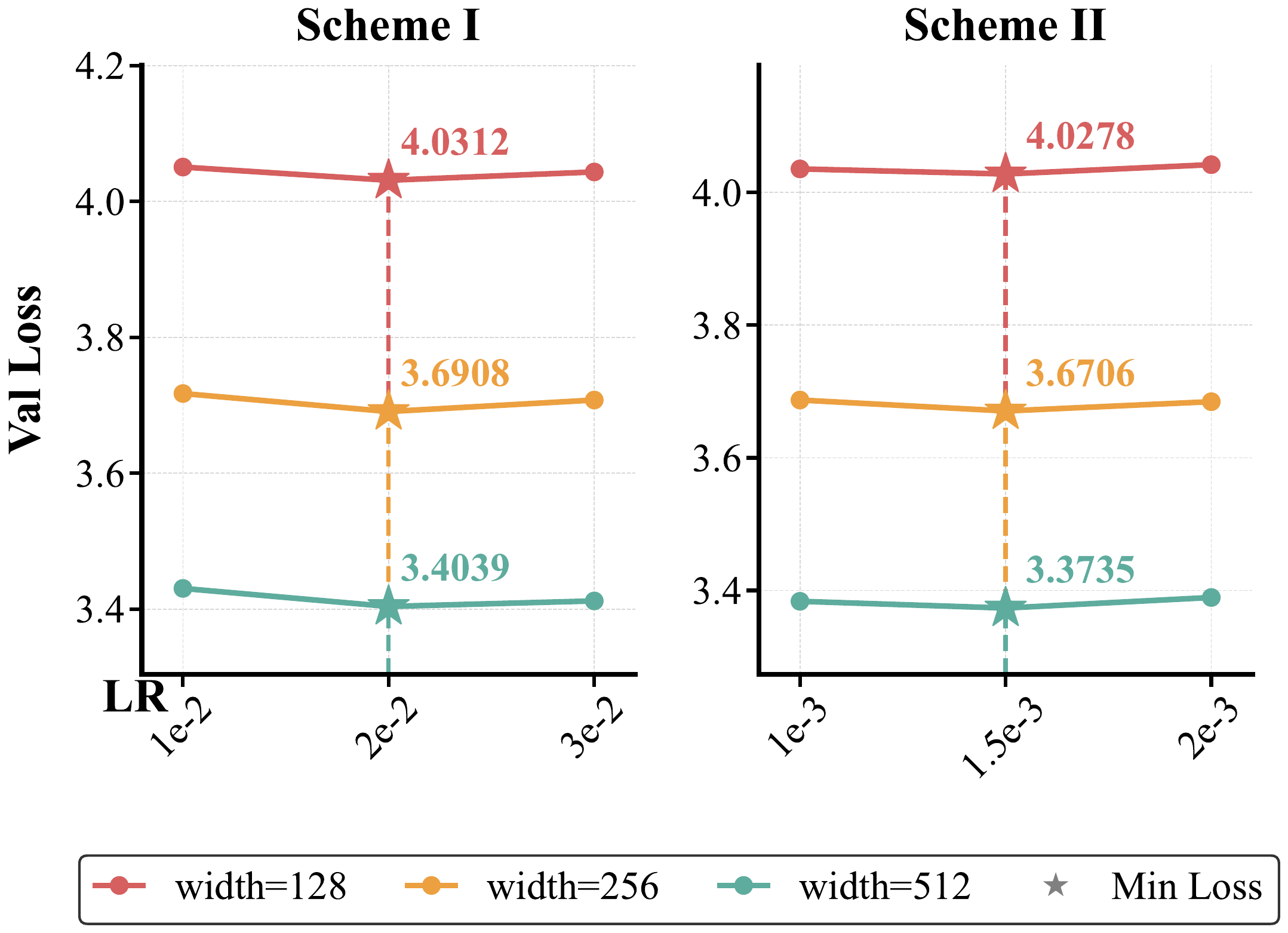}
\vspace{-3.5mm}
\caption{\scriptsize $\mu$P learning rate grid search.}
    \label{fig:mup curve}
\vspace{3mm}
\end{wrapfigure}

A key implication of satisfying the $\mu$P condition is that hyperparameters become transferable under width scaling. To verify this property for both proposed schemes, we perform experiments on a LLaMA-based architecture. Specifically, we scale the hidden size, intermediate size, and number of attention heads while keeping the head dimension fixed. For each configuration, we sweep the learning rate and identify the value that yields the lowest validation loss. The hidden size is varied over $\{128, 256, 512\}$. Regarding the specific experimental details, for Scheme I, we directly normalize the spectral norm to 1.0. For Scheme II, to prevent the large $\mu$P-derived learning rates from destabilizing the parameters optimized by AdamW, we explicitly set $\alpha = 10.0$ in Eq.~\ref{eq:alpha}. As illustrated in Fig.~\ref{fig:mup curve}, the validation loss curves across all three hidden sizes demonstrate a precise alignment of their optimal learning rates under both Scheme I and Scheme II, confirming that both approaches successfully achieve hyperparameter transferability.

\newpage
\section{Additional Results}
\subsection{Another variant of Pion}\label{app:imp_details}
Here, we provide another variant of Pion, \ie, Pion with Transported Ambient-space Momentum.
\begin{algorithm}[h!]
\LinesNumbered
\caption{Pion with Transported Ambient-Space Momentum}
\label{alg:pion_ambient}
\KwIn{Learning rate $\eta$, momentum coefficients $\beta_1,\beta_2$, RMS constant $c$, stability constant $\epsilon$, alternating flag, initial weight $\bm{W}_0 \in \mathbb{R}^{d_{\mathrm{out}}\times d_{\mathrm{in}}}$}
\KwOut{Optimized parameter $\bm{W}_t$}
Initialize $\bm{m}_0, \bm{v}_0 \leftarrow \bm{0} \in \mathbb{R}^{d_{\mathrm{out}}\times d_{\mathrm{in}}}$\;
Define $\mathcal{E}_2(\bm{A}, \alpha) \leftarrow \bm{I}+\eta\alpha \bm{A}+\frac{1}{2}(\eta\alpha \bm{A})^2$\;
\For{$t=1,2,\ldots$}{
    $\bm{G}_t \leftarrow \nabla_{\bm{W}} f(\bm{W}_{t-1})$\;
    $\bm{m}_t \leftarrow \beta_1 \bm{m}_{t-1} + (1-\beta_1)\bm{G}_t$, 
    $\bm{v}_t \leftarrow \beta_2 \bm{v}_{t-1} + (1-\beta_2)(\bm{m}_t \odot \bm{m}_t)$\;
    $\widetilde{\bm{G}}_t \leftarrow \bm{m}_t / (\sqrt{\bm{v}_t}+\epsilon)$\;
    $\bm{S}^{\mathrm{in}}_t \leftarrow \bm{W}_{t-1}^{\top}\widetilde{\bm{G}}_t - \widetilde{\bm{G}}_t^{\top}\bm{W}_{t-1}$, 
    $\bm{S}^{\mathrm{out}}_t \leftarrow \widetilde{\bm{G}}_t\bm{W}_{t-1}^{\top} - \bm{W}_{t-1}\widetilde{\bm{G}}_t^{\top}$\;
    $\bm{A}^{\mathrm{in}}_t \leftarrow -\bm{S}^{\mathrm{in}}_t$, 
    $\bm{A}^{\mathrm{out}}_t \leftarrow -\bm{S}^{\mathrm{out}}_t$\;
    \eIf{alternate update is used}{
        \eIf{$t$ is odd}{
            $\alpha_t \leftarrow c\sqrt{d_{\mathrm{out}}d_{\mathrm{in}}}/ \big(\|\bm{W}_{t-1}\bm{A}^{\mathrm{in}}_t\|_F+\epsilon\big)$\;
            $\bm{W}_t \leftarrow \bm{W}_{t-1}\mathcal{E}_2(\bm{A}^{\mathrm{in}}_t, \alpha_t)$, 
            $\bm{m}_t \leftarrow \bm{m}_t \mathcal{E}_2(\bm{A}^{\mathrm{in}}_t, \alpha_t)$\;
        }{
            $\alpha_t \leftarrow c\sqrt{d_{\mathrm{out}}d_{\mathrm{in}}}/ \big(\|\bm{A}^{\mathrm{out}}_t\bm{W}_{t-1}\|_F+\epsilon\big)$\;
            $\bm{W}_t \leftarrow \mathcal{E}_2(\bm{A}^{\mathrm{out}}_t, \alpha_t)\bm{W}_{t-1}$, 
            $\bm{m}_t \leftarrow \mathcal{E}_2(\bm{A}^{\mathrm{out}}_t, \alpha_t)\bm{m}_t$\;
        }
    }{
        $\alpha_t \leftarrow c\sqrt{d_{\mathrm{out}}d_{\mathrm{in}}}/ \big(\|\bm{A}^{\mathrm{out}}_t\bm{W}_{t-1}+\bm{W}_{t-1}\bm{A}^{\mathrm{in}}_t\|_F+\epsilon\big)$\;
        $\bm{W}_t \leftarrow \mathcal{E}_2(\bm{A}^{\mathrm{out}}_t, \alpha_t)\bm{W}_{t-1}\mathcal{E}_2(\bm{A}^{\mathrm{in}}_t, \alpha_t)$, 
        $\bm{m}_t \leftarrow \mathcal{E}_2(\bm{A}^{\mathrm{out}}_t, \alpha_t)\bm{m}_t \mathcal{E}_2(\bm{A}^{\mathrm{in}}_t, \alpha_t)$\;
    }
}
\Return{$\bm{W}_t$}\;
\end{algorithm}

\subsection{Pretraining}
\label{sec:more_results_in_pretraining}
In this section, we further provide additional results on spectrum evolution and training stability indicators. Specifically, we monitor the Frobenius norms of representative weight matrices together with their corresponding input and output activations, as well as the maximum attention logits throughout training. These quantities serve as practical indicators for characterizing optimization stability and activation amplification during large-scale training.

Figure~\ref{fig:spectrum_comparison_appendix} shows that Pion preserves the spectrum of weight matrices remarkably well throughout optimization, remaining highly consistent with the initialization. In contrast, both AdamW and Muon substantially distort the original spectra as training proceeds. This observation is consistent with the spectrum-preserving design of Pion and highlights its fundamentally different optimization dynamics.

A clear separation among optimizers can also be observed from the stability indicators (Figure~\ref{fig:attention_logit_appendix}, \ref{fig:stable_indicators_layer1_appendix},  \ref{fig:stable_indicators_layer12_appendix}, \ref{fig:stable_indicators_layer24_appendix}. AdamW exhibits continuously growing attention logits together with rapidly amplified activation magnitudes. Muon effectively suppresses the growth of attention logits, but the activations and down-projection norms still increase steadily during training. In contrast, Pion keeps nearly all monitored quantities flat and stable throughout optimization. Such distinctive behavior demonstrates the exceptional stability of Pion's spectrum-preserving updates and suggests strong potential for stable large-scale pretraining.

\begin{figure}[t!]
    \centering
    \setlength{\abovecaptionskip}{1pt}
    \setlength{\belowcaptionskip}{-6pt}
    \includegraphics[width=1\linewidth]{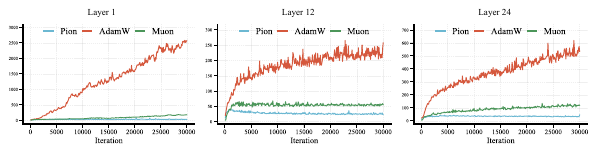}
    \caption{\scriptsize Indicators for stable pretraining. These figures show the maximum attention logit in the attention block of Layer 1, 12, 24.}
    \label{fig:attention_logit_appendix}
\end{figure}

\subsection{Supervised Fine-tuning}

To complement the aggregated results in the main text, we provide a detailed breakdown of scores for each individual benchmark in Table~\ref{tab:qwen_comparison} and Table~\ref{tab:llama32_comparison}. Furthermore, we investigate the alternating update strategy of the Pion optimizer. Empirical results indicate that while the bilateral update consistently outperforms its alternating counterpart, the alternating strategy still achieves competitive performance that is highly comparable to established baselines like AdamW and Muon. We hypothesize that in the supervised fine-tuning phase, which requires precise fitting to deterministic supervision signals, simultaneous double-side updates provide a more synchronous and direct gradient descent path.

\begin{table*}[htbp]
    \centering
    \scriptsize
    \renewcommand{\arraystretch}{1.2}
    \setlength{\tabcolsep}{2.5pt}
    \vspace{5mm}
    \begin{tabular}{ll|cccccc|ccccccc} 
        \multicolumn{2}{c|}{\multirow{2}{*}{\textbf{Method}}} & \multicolumn{6}{c|}{\textbf{MetaMath-50K}} & \multicolumn{7}{c}{\textbf{Magicoder-50K}} \\
        \multicolumn{2}{c|}{} & \makecell{ARC\_C} & \makecell{ARC\_E} & \makecell{Hella.} & \makecell{PIQA} & \makecell{Wino.} & \makecell{GSM8K} 
        & \makecell{ARC\_C} & \makecell{ARC\_E} & \makecell{Hella.} & \makecell{PIQA} & \makecell{Wino.} & \makecell{GSM8K} & \makecell{Human\\eval} \\\shline
        \multicolumn{2}{l|}{Base}  & 45.22 & 71.76 & 67.83 & 75.95 & 63.38 & 59.81 & 45.22 & 71.76 & 67.83 & 75.95 & 63.38 & 59.81 & 35.98 \\
        \multicolumn{2}{l|}{AdamW} & 41.46 & 63.46 & 66.88 & 75.35 & 63.52 & 65.88 & 43.71 & 66.62 & 67.57 & 74.64 & 64.03 & 59.28 & 51.83 \\
        \multicolumn{2}{l|}{Muon}  & 40.69 & 59.55 & 67.16 & 74.86 & 63.85 & 65.27 & 42.83 & 65.02 & 68.35 & 75.13 & 64.16 & 58.98 & 50.00 \\
        \rowcolor{gray!15} \phantom{\textbf{Pion}} & bilateral update & 42.38 & 63.59 & 66.64 & 75.80 & 62.40 & 65.76 & 44.71 & 66.29 & 66.56 & 74.59 & 63.54 & 63.54 & 53.05 \\ 
        \rowcolor{gray!15} \multirow{-2}{*}{\textbf{Pion}} & alternating update & 41.21 & 62.96 & 66.25 & 75.14 & 62.50 & 64.67 & 43.34 & 65.44 & 66.41 & 74.70 & 63.85 & 62.02 & 52.61 \\
        \specialrule{0em}{-5.25pt}{0pt}
    \end{tabular}
    \caption{\scriptsize Performance comparison of Pion and baseline optimizers on fine-tuning task using Qwen2.5-1.5B base model.}
    \label{tab:qwen_comparison}
    \vspace{6mm}
\end{table*}

\begin{table*}[htbp]
    \centering
    \scriptsize
    \renewcommand{\arraystretch}{1.2}
    \setlength{\tabcolsep}{2.5pt}
    \begin{tabular}{ll|cccccc|ccccccc} 
        \multicolumn{2}{c|}{\multirow{2}{*}{\textbf{Method}}} & \multicolumn{6}{c|}{\textbf{MetaMath-50K}} & \multicolumn{7}{c}{\textbf{Magicoder-50K}} \\
        \multicolumn{2}{c|}{} & \makecell{ARC\_C} & \makecell{ARC\_E} & \makecell{Hella} & \makecell{PIQA} & \makecell{Wino} & \makecell{GSM8K} 
        & \makecell{ARC\_C} & \makecell{ARC\_E} & \makecell{Hella} & \makecell{PIQA} & \makecell{Wino} & \makecell{GSM8K} & \makecell{Human\\eval} \\\shline
        \multicolumn{2}{l|}{Base} & 45.82 & 71.68 & 73.63 & 77.58 & 69.22 & 25.47 & 45.82 & 71.68 & 73.63 & 77.58 & 69.22 & 25.47 & 26.22 \\
        \multicolumn{2}{l|}{AdamW} & 37.03 & 57.20 & 68.49 & 76.17 & 65.43 & 59.87 & 45.39 & 69.65 & 72.02 & 76.33 & 66.06 & 22.37 & 46.95 \\
        \multicolumn{2}{l|}{Muon}  & 38.40 & 58.33 & 68.74 & 76.22 & 64.33 & 57.77 & 44.97 & 67.76 & 72.42 & 76.22 & 65.04 & 26.84 & 46.34 \\
        \rowcolor{gray!15} \phantom{\textbf{Pion}} & bilateral update & 37.09 & 56.30 & 67.70 & 76.57 & 64.54 & 58.83 & 44.14 & 68.38 & 71.94 & 76.77 & 67.69 & 29.49 & 47.19 \\
        \rowcolor{gray!15} \multirow{-2}{*}{\textbf{Pion}}& alternating update & 35.41 & 53.37 & 67.15 & 75.40 & 63.93 & 57.09 & 43.92 & 67.13 & 70.96 & 76.52 & 67.25 & 28.81 & 45.12 \\ \specialrule{0em}{-5.25pt}{0pt}
    \end{tabular}
    \caption{\scriptsize Performance comparison of Pion and baseline optimizers on fine-tuning task using Llama-3.2-3B base model.}
    \label{tab:llama32_comparison}
    \vspace{4mm}
\end{table*}

\subsection{Reinforcement Learning with Verifiable Reward}

While the main text reports the RLVR performance of AdamW, Muon, and the Pion optimizer utilizing the alternating update strategy, Table~\ref{app: tab rlvr} supplements these findings by providing the ablation results for Pion's bilateral update counterpart. Interestingly, contrary to the observations in the supervised fine-tuning phase, the alternating strategy proves superior in the RL setting. Nevertheless, the bilateral approach remains highly robust, delivering overall performance that is closely comparable to the established baselines. While yet to be rigorously verified through targeted experiments, we hypothesize that the superiority of the alternating strategy in RL stems from its implicit promotion of exploration within the parameter space. Unlike the deterministic supervision in SFT, RL relies on discovering optimal reasoning paths through sparse and noisy reward landscapes. In such settings, rigid bilateral updates might cause the policy to greedily over-optimize toward early and potentially sub-optimal reward signals, leading to premature convergence. Conversely, we conjecture that the decoupled nature of alternating updates introduces a beneficial exploratory variance into the optimization trajectory. Theoretically, this prevents the model from rapidly collapsing into local optima, thereby maintaining a healthier exploration-exploitation balance essential for robust policy learning.

\vspace{1.5em}
\begin{table*}[htbp]
    \centering
    \scriptsize
    \renewcommand{\arraystretch}{1.2}
    \setlength{\tabcolsep}{2.5pt}
    \resizebox{\textwidth}{!}{
    \begin{tabular}{ll|cccccc|cccccc} 
        \multicolumn{2}{c|}{\multirow{2}{*}{\textbf{Method}}} & \multicolumn{6}{c|}{\textbf{Qwen3-1.7B}} & \multicolumn{6}{c}{\textbf{DeepSeek-R1-Distill-Qwen-1.5B}} \\
        \multicolumn{2}{c|}{} & \makecell{AIME24 \\ (avg@32)} & \makecell{AIME25 \\ (avg@32)} & \makecell{AMC23 \\ (avg@8)} & \makecell{Minerva \\ Math \\ (avg@4)} & \makecell{Olympiad \\ Bench \\ (avg@8)} & Avg 
        & \makecell{AIME24 \\ (avg@32)} & \makecell{AIME25 \\ (avg@32)} & \makecell{AMC23 \\ (avg@8)} & \makecell{Minerva \\ Math \\ (avg@4)} & \makecell{Olympiad \\ Bench \\ (avg@8)} & Avg \\\shline
        \multicolumn{2}{l|}{Base} & 4.06 & 10.10 & 30.27 & 16.27 & 23.67 & 16.87 & 20.52 & 20.83 & 54.06 & 19.39 & 36.20 & 30.20 \\
        \multicolumn{2}{l|}{AdamW} & 22.71 & 20.94 & 58.43 & 25.91 & 46.09 & 34.82 & 25.42 & 23.94 & 62.65 & 23.16 & 44.69 & 35.97 \\
        \multicolumn{2}{l|}{Muon} & 20.42 & 19.27 & 54.22 & 24.08 & 42.41 & 32.08 & 29.06 & 23.33 & 66.72 & 22.89 & 44.61 & 37.32 \\
        \rowcolor{gray!15} \phantom{\textbf{Pion}} & bilateral update & 23.44 & 17.40 & 54.07 & 26.47 & 44.33 & 33.14 & 25.04 & 23.73 & 64.76 & 22.70 & 43.98 & 36.04 \\ 
        \rowcolor{gray!15} \multirow{-2}{*}{\textbf{Pion}} & alternate update & 25.42 & 21.98 & 59.94 & 26.84 & 46.43 & \textbf{36.12} & 30.00 & 24.38 & 66.87 & 23.90 & 46.43 & \textbf{38.32} \\ 
        \specialrule{0em}{-5.25pt}{0pt}
    \end{tabular}
    }
    \caption{\scriptsize Performance Comparison of Pion and Baseline Optimizers on RLVR Tasks. The metric (avg@K) denotes the average accuracy over $K$ generated samples per problem. \textbf{Bold} values indicate the best overall average results.}
    \label{app: tab rlvr}
    \vspace{2mm}
\end{table*}
\newpage
\clearpage
\begin{figure}
    \centering
    \includegraphics[width=\linewidth]{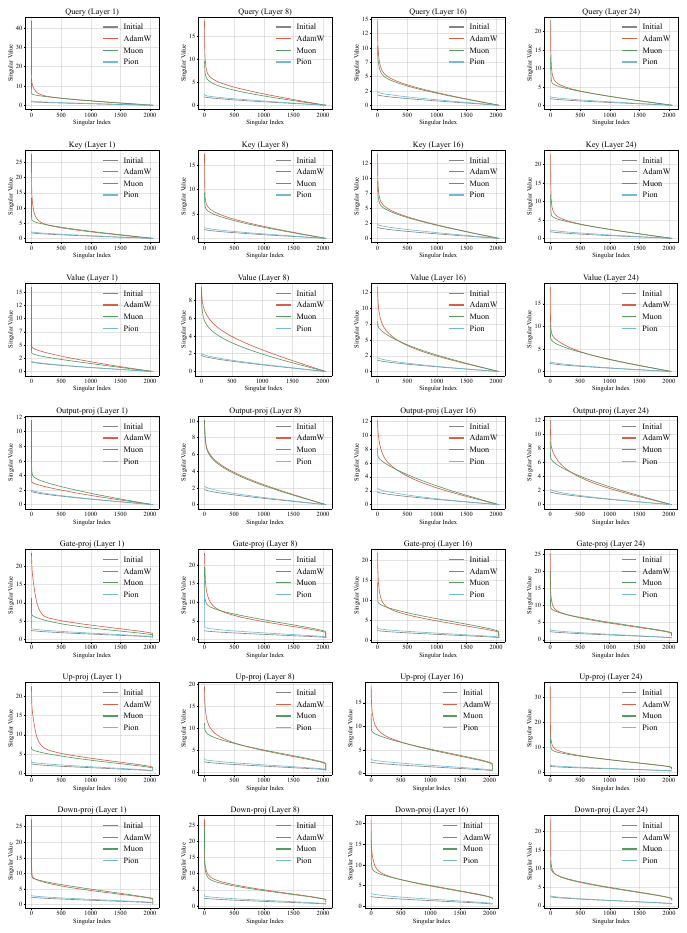}
    \caption{\scriptsize Comparison of final spectrum with the initial spectrum.}
    \label{fig:spectrum_comparison_appendix}
\end{figure}

\newpage
\clearpage
\begin{figure}
    \centering
    \includegraphics[width=\linewidth]{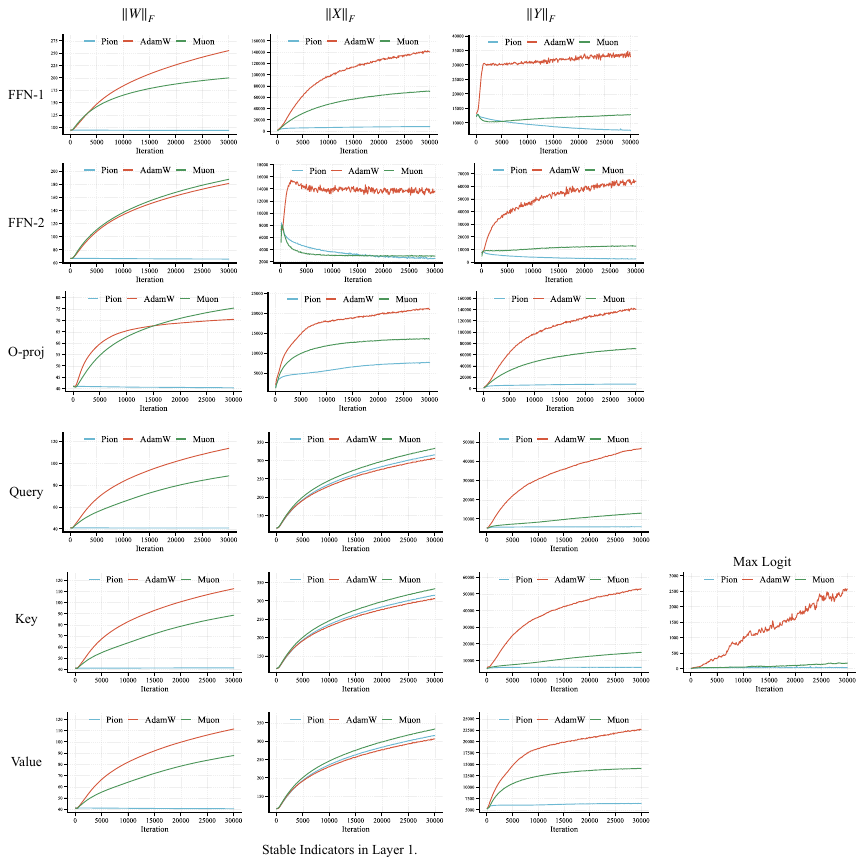}
\caption{\scriptsize Stable indicators for Layer 1. “FFN-1” denotes the combined gate and up projections in the LLaMA-style feed-forward network, where we compute the norm using their overall input and output; “FFN-2” denotes the down projection matrix; “O-proj” denotes the output projection in the attention module; “Query” denotes the query projection; “Key” denotes the key projection; and “Value” denotes the value projection.}
    \label{fig:stable_indicators_layer1_appendix}
\end{figure}
\clearpage

\begin{figure}
    \centering
    \includegraphics[width=\linewidth]{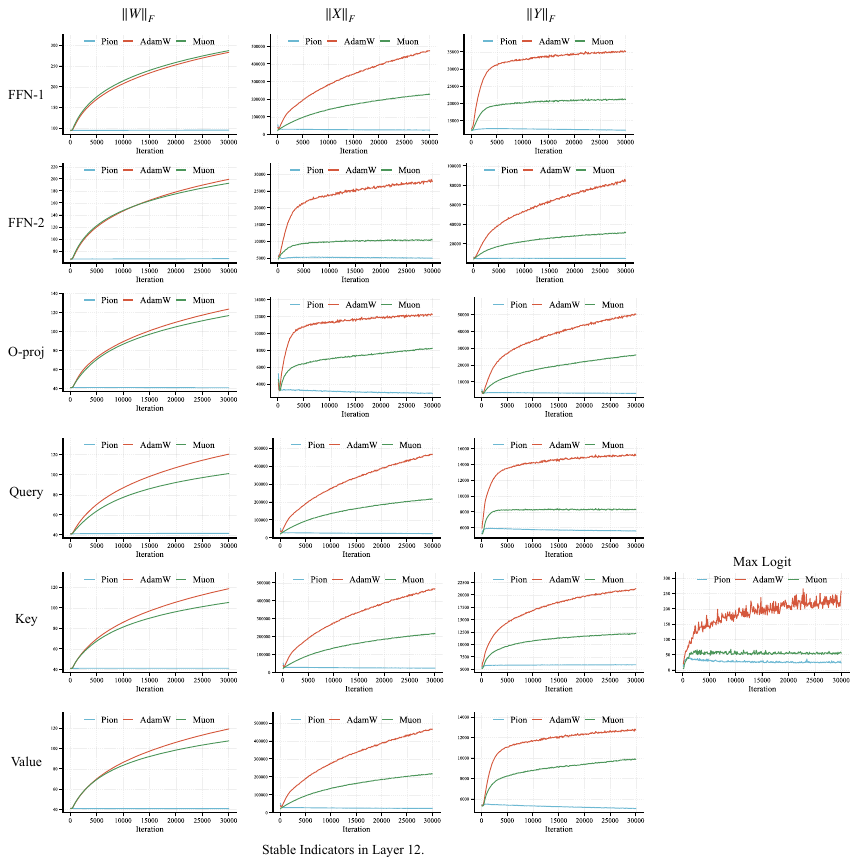}
    \caption{\scriptsize Stable indicators for Layer 12. “FFN-1” denotes the combined gate and up projections in the LLaMA-style feed-forward network, where we compute the norm using their overall input and output; “FFN-2” denotes the down projection matrix; “O-proj” denotes the output projection in the attention module; “Query” denotes the query projection; “Key” denotes the key projection; and “Value” denotes the value projection.}
    \label{fig:stable_indicators_layer12_appendix}
\end{figure}
\clearpage

\begin{figure}
    \centering
    \includegraphics[width=\linewidth]{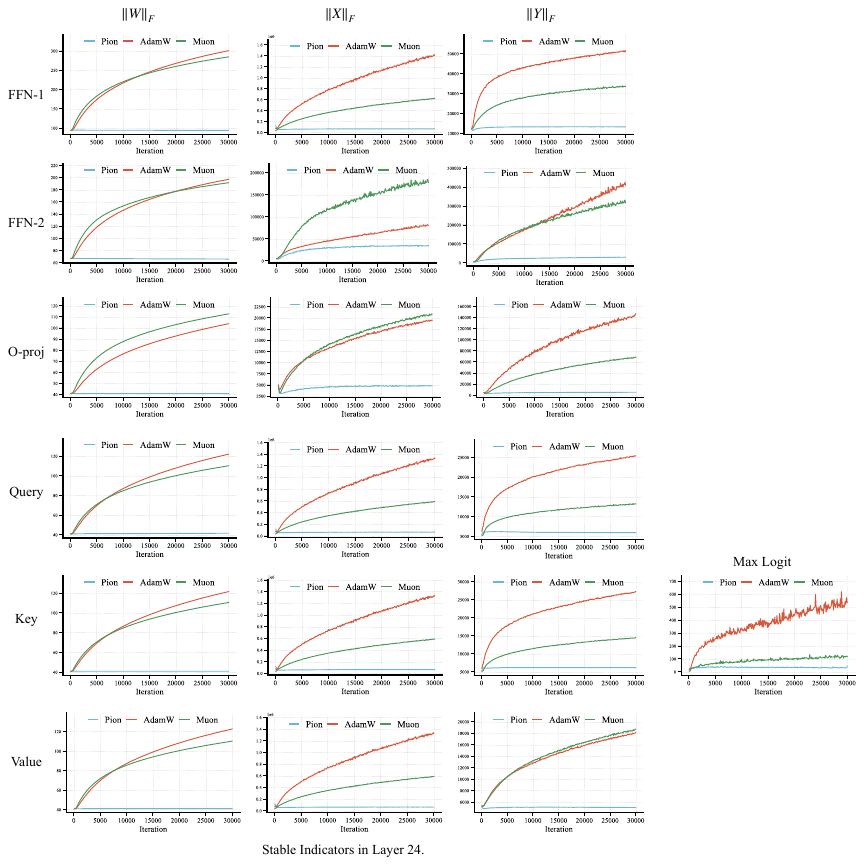}
    \caption{\scriptsize Stable indicators for Layer 24. “FFN-1” denotes the combined gate and up projections in the LLaMA-style feed-forward network, where we compute the norm using their overall input and output; “FFN-2” denotes the down projection matrix; “O-proj” denotes the output projection in the attention module; “Query” denotes the query projection; “Key” denotes the key projection; and “Value” denotes the value projection.}
    \label{fig:stable_indicators_layer24_appendix}
\end{figure}
\clearpage

\end{document}